\newtheorem{thm}{Theorem}
\DeclareMathOperator*{\argmin}{arg\,min}
\newcommand{\ignore}[1]{}{}
\newcommand{\vertiii}[1]{{\left\vert\kern-0.25ex\left\vert\kern-0.25ex\left\vert #1 
    \right\vert\kern-0.25ex\right\vert\kern-0.25ex\right\vert}}
\begin{document}

\title{Varying-coefficients for regional quantile  via KNN-based LASSO with applications to health outcome study}

\author[1]{Seyoung Park}

\author[1]{Eun Ryung Lee}

\author[2]{Hyokyoung G. Hong*}

\authormark{Park \textsc{et al}}

\address[1]{\orgdiv{Department of Statistics}, \orgname{Sungkyunkwan University}, \orgaddress{\state{Seoul}, \country{Republic of Korea}}}

\address[2]{\orgdiv{Biostatistics Branch, Division of Cancer Epidemiology and Genetics}, \orgname{NCI/NIH}, \orgaddress{\state{MD}, \country{United States of America}}}

 \corres{*Hyokyoung G. Hong,	 \email{grace.hong@nih.gov}}


\abstract{
Health outcomes, such as body mass index and cholesterol levels, are known to be dependent on age and exhibit varying effects with their associated risk factors. In this paper, we propose a novel framework for dynamic modeling of the associations between health outcomes and risk factors using varying-coefficients (VC) regional quantile regression via K-nearest neighbors (KNN) fused Lasso, which captures the time-varying effects of age. The proposed method has strong theoretical properties, including a tight estimation error bound and the ability to detect exact clustered patterns under certain regularity conditions. To efficiently solve the resulting optimization problem, we develop an alternating direction method of multipliers (ADMM) algorithm. Our empirical results demonstrate the efficacy of the proposed method in capturing the complex age-dependent associations between health outcomes and their risk factors.
}

\keywords{KNN,  Lasso, Varying-coefficients,  Health outcome study, Regional quantile regression}

\jnlcitation{\cname{%
\author{S. Park}, 
\author{E. R. Lee},   and
\author{H. G. Hong} (\cyear{2022}), 
\ctitle{Regional quantile modeling of varying-coefficients with applications to health disparity study}}}

\maketitle

\footnotetext{Seyoung Park and Eun Ryung Lee are co-first authors}

\section{Introduction}



In recent years, health outcome research has played a crucial role in identifying disparities among different racial and ethnic groups, enabling policymakers and clinicians to make informed decisions for individuals from diverse socioeconomic backgrounds. Numerous studies have evaluated racial disparities in various health outcomes, such as body mass index (BMI), sleep duration, and cholesterol levels. \cite{beydoun2009gender, grandner2016sleep, stewart2002racial}  BMI, in particular, has been widely used as a health risk indicator in clinical and public health research. \citep{SRo2002, NFQ2015} To better understand the factors that drive BMI, researchers from various fields have explored statistical techniques to identify important predictors. For example, Huang et al. \cite{HMXZ2009} proposed a group bridge approach for selecting risk factors of BMI, while Rehkopf et al. \cite{RLS2011} used a random forest technique to rank risk factors according to their relative importance score. Gao et al. \cite{GPRZ2017} proposed a variable selection method to identify relevant BMI risk factors, assuming that the impact of these determinants is an unknown function of categorical demographic variables.

Existing literature on BMI studies has primarily focused on modeling the relationships between risk factors and the mean BMI level from average individuals, while ignoring age or time dependency. This approach has limited insights regarding the BMI distribution in the population. In this paper, we propose a tailored statistical approach for detecting dynamic and heterogeneous associations between health outcomes and risk factors. Specifically, we adopt a varying-coefficient quantile model framework to explain associations between health outcomes and risk factors in the presence of varying effects of age.

Varying-coefficient (VC) models have gained popularity in both theoretical and practical aspects since their inception. \citep{JFWZ1999, TJHT1993, QLDO2013, QLJR2010, HWYX2009, LLCGS19912}  To deal with high-dimensional variables, VC approaches have incorporated variable selection procedures. \citep{SMPXKS2015, LWHLH2008, HWYX2009, GPRZ2017, MOK2007} For instance, Gao et al. \cite{GPRZ2017} considered the variable selection problem for the categorical varying-coefficient model, based on a penalized approach using group Lasso. \citep{MYYL:2006} Nonparametric approaches, based on basis functions, have also been widely used for estimation and variable selection in VC models. \citep{LWHLH2008, WJL:2011, LXAQ:2012, OKMP:2015, Honda2019}

We consider a  VC regional quantile model in this paper, which is defined as follows:
\[Y(T,X)=X^\top \beta(T, \tau)+\epsilon(\tau)\quad \mbox{for}\quad \tau \in \Delta, \]
where $P(\epsilon(\tau) \leq 0 | X,T) = \tau$ for $\tau \in \Delta$, $\Delta \subset (0,1)$ is a quantile interval of interest, $T$ is an age index, $X$ are covariates, and $Y$ is an outcome variable. This model framework naturally arises in many real-life applications. \citep{SPEL2021} Analyzing the behavior of VC over a range of quantiles is important in the field of regression analysis. When various quantile levels are of interest, a typical approach is to individually fit a quantile regression and obtain inference at each quantile level, which may result in a loss of estimation efficiency because regressions at adjacent quantile levels are expected to share similar features. In such cases, a regional quantile regression approach \citep{Zheng2015globally, SPEL2021} can be a useful alternative and may lead to a more efficient estimation procedure.

Despite the importance of both theoretical and practical aspects, there is a lack of literature on the selection and estimation of clustered patterns in the coefficient function under VC regional quantile settings. Our main objective is to detect regional clustered patterns in the regional quantile regression coefficients, $\beta(T, \tau)$, using K-nearest neighbors (KNN) fused Lasso. The proposed method can identify de-noised clustered patterns between the risk factors and health outcomes, select important determinants of the regional health outcome quantiles (such as the upper level of the BMI distribution), and simultaneously estimate varying coefficients across both age and quantiles of BMI.

Our work is related to Padilla et al., \cite{PSCW2020} who combined fused Lasso with the KNN procedure in a mean-based regression model, and more recently, Ye and Padilla, \cite{SSYOHMP2021} who proposed a nonparametric quantile regression using KNN-based fused Lasso penalty. However, it is important to note that Ye and Padilla \cite{SSYOHMP2021} modeled the conditional quantile of the response variable as a nonparametric function of covariates at a fixed quantile level. Li and Sange \citep{FLHS2019} considered the spatially clustered varying-coefficient model using the fused Lasso method, but it was based on a linear model and only allowed a simple tree structure in the graph. Yang et al. \cite{YCC2017} adopted the parametric quantile regression method of Frumento and Bottai \cite{PFMB2016} to analyze longitudinal BMI data. These approaches differ from our proposed method, which considers a structured nonparametric model for regional quantiles. The proposed model shares common structural information across adjacent quantile levels and may better detect patterns over both quantiles, $\tau$, and index,  $T$. Our work is also more general since we do not require parametric specifications for quantile coefficient functions, which is not a trivial assumption.

We summarize the key properties of the proposed method as follows:
(1) The proposed approach based on quantile regression yields robust estimates despite the violation of normality assumption;
(2) The conditional quantile framework allows us to explore the heteroscedastic relationship between different sublevels of the dependent variable and covariates, which cannot be captured by the standard regression approach;
(3) By adopting a nonparametric approach to modeling the covariate effects under the regional quantile VC framework, our method can handle a nonlinear relationship between the dependent variable and covariates;
(4) Compared to its local counterpart, the regional quantile approach provides more stable and interpretable results;
(5) By leveraging the insight from the quantile KNN fused Lasso, \cite{SSYOHMP2021} our algorithm can detect underlying clustered patterns in the VC functions;
and (6) The proposed optimization via the efficient alternating direction method of multipliers (ADMM) algorithm is computationally scalable since each updating step has a closed-form solution and utilizes parallel computing.

The remainder of this paper is organized as follows. In Section 2, we present the proposed VC quantile model via KNN fused Lasso method, its theoretical properties, and the ADMM algorithm. In Section 3, we evaluate the finite sample performance of the proposed methods using simulation studies. In Section 4, we apply the proposed method to two health outcome studies. Finally, Section 5 concludes the paper and discusses potential future research questions. Technical proofs, additional simulation results, and figures are provided in the Supplementary material.

\section{Varying-Coefficient Quantile model via KNN Fused Lasso} \label{sec2vc}
In this section, we propose VC regional quantile regression via KNN fused Lasso and {study} theoretical properties and   ADMM algorithm.
In the VC regional quantile model,  for units $i=1,\ldots, n$ with a covariate vector $x_i = [x_{i1},\ldots, x_{ip}]^\top \in \mathbb{R}^p$,
a response, $y_i$, can be modeled as
\[
y_i = \sum_{j=1}^p x_{ij} \beta^o_j(t_i,\tau) + \epsilon_i(\tau) \quad \mbox{for}\quad \tau \in \Delta,
\]

where {$t_i \in [0,1]$ is a time  index variable, e.g., age in our applications,}  $\beta^o_j(t,\tau)$  is the underlying coefficient function for the $j$th covariate, $\Delta \subset (0,1)$ is the quantile interval of interest, and the conditional $\tau$-th quantile of a random error $\epsilon_i(\tau)$ given $(t_i, x_i)$ is zero.

We first construct the KNN graph $G$ based on $(t_i, \tau_{i}), ~1\le i \le n$ in the domain of coefficient functions $\beta_j$'s, where 
the quantile level $\tau_i$'s are randomly chosen from $\Delta$, { i.e., $\tau_i \sim \mbox{uniform}(\Delta)$, which is to discretize $\Delta$.} Specifically, each $(t_i, \tau_{i})$, for $i=1,\ldots, n$, corresponds to a node in the graph $G$ and its edge set $E$ contains the pair $(i,k)$ for $i \neq k$, if and only if  $(t_i, \tau_{i})$ is among the K-nearest neighbors of $(t_k, \tau_k)$.
We propose the regional quantile KNN fused Lasso (RQF) method in varying-coefficients models 
 to estimate the coefficient function  $\beta^o_j(t,\tau)$ as follows:
\begin{equation} \label{eqmain}
\min_{\beta_1,\ldots, \beta_p} \frac{1}{n}\sum_{i=1}^n \rho_{\tau_i}\left(y_i -  \sum_{j=1}^p x_{ij} \beta_{j}(t_i, \tau_i)\right)  +  \lambda \sum_{j=1}^p  \|H \beta_j\|_1,
\end{equation}
where $\beta_j = (\beta_{j1},\ldots, \beta_{jn})^\top \in \mathbb{R}^n$ with $\beta_{ji} :=  \beta_{j}(t_i, \tau_i)$ for $i=1,\ldots, n$.

Here, $H$ is an $|E| \times n$ oriented incidence matrix of the KNN
graph $G$, and thus each row of $H$ corresponds to an edge $e \in E$. \cite{PSCW2020} 
Specifically, if the $m$-th edge in $G$ connects the $i_m$-th and $k_m$-th nodes, then

\[
H_{m,l} =
\begin{cases}
1 & \mbox{if}\ l=i_m \\
-1 & \mbox{if}\ l=k_m\\
0 & \mbox{otherwise},
\end{cases}
\]
and  $(H \beta_j)_m = \beta_{j,i_m} - \beta_{j,k_m}$, $\|H \beta_j\|_1 =  \sum_m |\beta_{j,i_m} - \beta_{j,k_m}|$.
In \eqref{eqmain}, we considered a single quantile level $\tau_i$ for each sample $i$ to reduce  computational cost. If we consider fixed multiple quantile levels $\tau_1,\ldots, \tau_K \in \Delta$,  as in the composite quantile regression,  
 then the computation is nearly infeasible. {In addition, the large sample size as in our real data examples would make it worse}.

In \eqref{eqmain}, the fused Lasso  penalty enforces sparsity of the difference
in two edge-connected coefficients. This allows the estimation of  coefficients with clustered patterns if edge sets are selected appropriately.
Using the obtained $\hat{\beta}_j$, we can estimate the value of coefficient corresponding to a new $(t, \tau)$ by the averaged estimated {values of the KNN} as follows:
\begin{equation}\label{eqneighbor}
\hat{\beta}_j(t,\tau) = \frac{1}{K} \sum_{i=1}^n \hat{\beta}_j(t_i,\tau_i) \ 1\{(t_i, \tau_i) \in N_K(t,\tau) \},
\end{equation}
where $N_K(t,\tau)$ is the set of {KNN} of $(t,\tau)$ in a training data $\{(t_i, \tau_i):\ i=1,\ldots, n\}$.  Thus, it leads to smooth and locally adaptive VC estimates.

Note that the tuning parameter $\lambda$ in (1) controls the number of clusters in regression coefficients. When $\lambda=0$, it reduces to the ordinary regional quantile regression; {when} $\lambda \to \infty$, RQF yields a { nearly} constant regression coefficient { in that $\hat{\beta}_j(t_i,\tau_i) = \hat{\beta}_j(t_k,\tau_k)$ for $(i,k) \in E$.}
With an appropriate choice of $\lambda$, RQF produces clustered regression coefficients. 
In practice, we propose the following BIC to choose $\lambda$:
\[
\mbox{BIC} = \log\left\{\sum_{i=1}^n \rho_{\tau_i}\left(y_i -  \sum_{j=1}^p x_{ij} \beta_{j}(t_i, \tau_i)\right)\right\} +  \frac{\log n}{n} \sum_{j=1}^p \|H \beta_j\|_0,
\]
where $\|H \beta_j\|_0$ represents  the number of  nonzero values in $H\beta_j$.

\subsection{Notations}

Throughout the paper,    
 $\|A\|_{op}$ represents an operator norm of a matrix $A$, i.e., the maximum singular value of $A$, and
 $\lambda_{\max}(B)$ is a maximum eigenvalue of a symmetric matrix $B$.
We write $a \lesssim b$ if $a \le C_1 b$ for some
positive  constant $C_1$, $a \asymp b$ if $a \lesssim b$ and $b \lesssim a$,
 and  $a \vee b$  and $a \wedge b$ to
denote $\max(a,b)$ and $\min(a,b)$, respectively.
 For a vector $x$, let $\mbox{support}(x) = \{j:\ x_j \neq 0\}$ be the index set
of non-zero entries of the vector $x$ and $\|x\|_0 = |\mbox{support}(x)|$ is the cardinality of $\mbox{support}(x)$.
For a vector $x$ and the index set $S$, let $x_S$  be the subvector of $x$ with components in $S$. 
For a matrix $M=(M_{ij})$, let $\|M\|_F=\left(\sum_i \sum_j M_{ij}^2\right)^{1/2}$, $\|M\|_1=\sum_i \sum_j |M_{ij}|$,  $\|M\|_{\max} = \max_{i,j}|M_{ij}|$, 
$\vertiii{M}_1 = \max_j \sum_i |M_{ij}|$, and $\vertiii{M}_{\infty} = \max_i \sum_j |M_{ij}|$.
For a random sample $t_1,\ldots, t_n$, let $E_n[t_i] := n^{-1} \sum_{i=1}^n t_i$.

\subsection{Theoretical properties}
\label{sec:theory:vc}
{For easier presentation, we introduce a new parameter $\theta$, which is a reparametrization of $\beta_j(t_i, \tau_i)$ for $i=1,\ldots, n$ and $j=1,\ldots, p$.} Suppose that in the KNN graph $G$, there exists $L \ge 1$ connected components, say
$G_1,\ldots, G_L$, { where the subgraph $G_l$ has a node set $V_l$ and  an edge set $E_l$ with $|V_l| = n_l$, $\bigcup_{l=1}^L V_l = \{1,\ldots, n\}$, and $V_l \cap V_{\tilde l} = \emptyset$ for $l \neq \tilde l$. Note that given $K$ and $(t_i, \tau_i), ~i=1,\ldots,n$,  the number of connected components $L$ and the graph $G$ are determined.  
For example, if $n$ is a square number and $(t_i,\tau_i),~i=1,\ldots,n$, are $\sqrt{n} \times \sqrt{n}$ rectangular grid points in $[0,1]\times \Delta$, then the KNN graph $G$ with $K\ge 4$ is itself connected, i.e., $L=1$.   
By rearranging sample indices}, let  $V_l$'s be increasingly ordered sets, i.e., $V_1 = \{1,\ldots, n_1\}, V_2 = \{n_1+1,\ldots, n_1+n_2\}, \ldots,
V_L = \{\sum_{l=1}^L n_l - n_L +1,\ldots, \sum_{l=1}^L n_l\}$.
We can write $H= \mbox{Block}(H_1, \ldots, H_L) \in \mathbb{R}^{|E| \times n}$ as a block diagonal matrix consisting of $H_l$'s,
where rows of $H_l$ corresponds to edges of the $l$-th connected group  $G_l$ such that $H_l \in \mathbb{R}^{|E_l| \times n_l}$. Define $\tilde{H}_l = [1_{n_l}/\sqrt{n_l}, H_l^\top]^\top \in \mathbb{R}^{(|E_l| + 1) \times n_l}$, where $1_{n_l}$ is $n_l$-dimensional vector with all $1$'s.
We can see that $\tilde{H}_l^\top \tilde{H}_l = H_l^\top H_l + \frac{1}{n_l} 1_{n_l} 1_{n_l}^\top$, where $H_l^\top H_l$ represents the Laplacian matrix, or called the graph Laplacian  of the component $G_l$.  Thus, $\tilde{H}_l^\top \tilde{H}_l \in \mathbb{R}^{n_l \times n_l}$ is invertible.

Without loss of generality, we write 
\[
\tilde{H}_l
= \begin{bmatrix}
\tilde{H}_l^{(1)} \\
\tilde{H}_l^{(2)}
\end{bmatrix},
\quad \mbox{where}\quad \tilde{H}_l^{(1)} \in \mathbb{R}^{n_l \times n_l} \quad \mbox{is invertible and}\quad \tilde{H}_l^{(2)} \in \mathbb{R}^{(|E_l| + 1-n_l) \times n_l}.
\]
This can be achieved  by rearranging the rows of $\tilde{H}_l$ such that the first $n_l$ rows corresponds to the edges of the minimum spanning tree of $G_l$ and the vector $1^\top_{n_l}/\sqrt{n_l}$.
We can write the parameter $\beta_j$ with $\beta_j = [(\beta^j)^\top_{G_1}, \ldots, (\beta^j)^\top_{G_L}]^\top$, where $\beta^j_{G_l} =(\beta_j(t_i,\tau_i):i\in V_l)^\top \in \mathbb{R}^{n_l}$ is a subvector of $\beta^j$ corresponding to the node (index) in the graph $G_l$.
Then, $\beta^j_{G_l}$ can be rewritten using a new parameter $\theta^j_{G_l} \in \mathbb{R}^{n_l}$ as $\beta^j_{G_l} = [\tilde H_l^{(1)}]^{-1} \theta^j_{G_l}$.   
Let $\theta_j = [(\theta^j)^\top_{G_1}, \ldots, (\theta^j)^\top_{G_L}]^\top \in \mathbb{R}^{n }$ for $j=1,\ldots, p$, and $\theta = [\theta_1^\top,\ldots, \theta_p^\top]^\top \in  \mathbb{R}^{np }$.

Then, the problem \eqref{eqmain} can be rewritten as
\begin{equation}\label{eqmainprob}
\min_{\theta \in \mathbb{R}^{np}} \frac{1}{n}\sum_{i=1}^n \rho_{\tau_i}\left(y_i - \tilde{x}_i^\top \theta \right)  +  \lambda \sum_{j=1}^p \|\theta_{j,B_1}\|_1 +
\lambda \sum_{j=1}^p \|\ddot{H}\theta_j\|_1,
\end{equation}
 where $\ddot{H}$ is defined in  Section A of the Supplementary material,  $\theta_{j,B_1} = [(\theta^j_{G_1, B_1})^\top, \ldots, (\theta^j_{G_L, B_1})^\top]^\top \in \mathbb{R}^{n-L}$, and $\theta^j_{G_l, B_1}$'s are defined in the Supplementary material.
From the estimate $\hat \theta$ of \eqref{eqmainprob}, we can obtain the estimate $\hat \beta_j  = [(\hat\beta_{G_1}^j)^\top, \ldots, (\hat\beta^j_{G_L})^\top]^\top$, where $\hat\beta^j_{G_l} = [\tilde H_l^{(1)}]^{-1} \hat \theta^j_{G_l}$.

\ignore{
Let $S_j = \mbox{support}(H\beta^o_j) \subseteq \{1,\ldots, |E|\}$ be the set of non-zero indices of $H\beta^o_j$ for $j=1,\ldots, p$.
Let $s_j = |S_j|$ and $s=\sum_{j=1}^p s_j$. For example, if $S_j = \emptyset$, then this suggests that $\beta^o_j$ has a unique value, i.e., it has only one clustered pattern.
Our main objective is to estimate underlying clustered pattern for each $j$, i.e., $S_j$.
We estimate $S_j$ by $\hat S_j = \mbox{support}(H\hat\beta_j)$.}

\ignore{ {\color{red} Define $S_{j} :=  \mbox{support}(H\beta^o_{j})$, where $\beta_j^o\in \mathbb{R}^{np}$ is defined in the same way as $\beta_j\in \mathbb{R}^{np}$ but with replacements of $\beta_j(t_i, \tau_i)$'s by $\beta_j^o(t_i, \tau_i)$'s.} The underlying clustered pattern for each $j$ is explained by the support set {\color{red}$S_j$}, indicating which edge-connected differences in $\beta^o_j$ are nonzero.  For example, if $S_j = \emptyset$, then  $\beta^o_j$ has a unique value, i.e., it has a one clustered pattern. Let  $\theta^o {\color{red}=[(\theta_1^o)^\top,\ldots, (\theta_p^o)^\top]^\top\in \mathbb{R}^{np} }$
, $\theta^o_j\in \mathbb{R}^{n}$, $\theta_{j,B_1}^o$,  $(\theta^o)^j_{G_l}\in \mathbb{R}^{n_l}$ and $(\theta^o)^j_{G_l,B_1}\in \mathbb{R}^{n_l-1}$  be the underlying vectors similarly defined as for $\theta$, $\theta_{j}$, $\theta_{j,B_1}$ $\theta_{G_l}^j$ and $\theta_{G_l, B_1}^j$, which is the function of $\beta^o_j( t_i, \tau_i)$ for $i=1,\ldots, n$ and $j=1,\ldots, p$.}
\ignore{
Because of \eqref{Hb}, the support set $S_j$ can be divided into support sets using $\theta_j^o$, that is, $S_{1j} :=  $\mbox{support}(\theta_{j,B_1})$ =  \{l: [\theta^o_{j,B_1}]_l \neq 0\}$ and  
 $S_{2j} := \mbox{support}(\ddot{H} \theta^o_j) =   \{l: [\ddot{H} \theta^o_j]_l  \neq 0\}$. So, $s^{(j)}=s_{1j}+s_{2j}$ and $s:=\sum_{j=1}^p s^{(j)} = s_1 + s_2$, where 
 we let $s^{(j)} = |S_j|$, 
$s_{1j} = |S_{1j}|$ and  $s_{2j} = |S_{2j}|$ denote the cadinalities of $S_j, S_{1j}$ and $S_{2j}$, respectively,  $s_1 := \sum_j s_{1j}$, and $s_2 := \sum_j s_{2j}$. Let  $A=\mbox{support}(\theta^o)\bigcup B_2$ and $S_{T}=\mbox{support}(\theta^o)\setminus B_2$, where $B_2=\{1,n_1+1,n_1+n_2+1, \ldots, \sum_{l=1}^{L-1} n_l+1\}$ is the index set corresponding to the vector $1_{n_l}/\sqrt{n_l}$. Then, $S_T$ is considered as a collection of $S_{1j},\; j=1,\ldots, p$, i.e., 
$|S_T|=s_1$.
}

 Let  $\theta^o =[(\theta_1^o)^\top,\ldots, (\theta_p^o)^\top]^\top\in \mathbb{R}^{np},$
$\theta^o_j\in \mathbb{R}^{n}$, and 
$\theta_{j,B_1}^o\in \mathbb{R}^{n-L}$ 
be the underlying vectors similarly defined as for $\theta$, $\theta_{j}$, and $\theta_{j,B_1}$, respectively, which is the function of $\beta^o_j( t_i, \tau_i)$'s. 
Let $S_{1j} :=  \mbox{support}(\theta^o_{j,B_1}) =  \{l: [\theta^o_{j,B_1}]_l \neq 0\}$,  $S_{2j} := \mbox{support}(\ddot{H} \theta^o_j) =   \{l: [\ddot{H} \theta^o_j]_l  \neq 0\}$, 
$s_{1j} = |S_{1j}|$,  $s_{2j} = |S_{2j}|$, $s_1 := \sum_j s_{1j}$, and $s_2 := \sum_j s_{2j}$. The underlying clustered pattern for each $j$ is explained by the support set $S_j:=S_{1j}\bigcup S_{2j}$, indicating which edge-connected differences in $\beta^o_j$ are nonzero.  For example, if $S_j = \emptyset$, then  $\beta^o_j(t_i,\tau_i) = \beta^o_j(t_k,\tau_k)$ for $(i,k) \in E$.
Let $S_1^U=\bigcup_j S_{1j}$, $S_2^U=\bigcup_j S_{2j}$,  $S:= S_{1}^U \cup S_{2}^U$, and $A := S_{1}^U \cup B_2$, where  $B_2$ is the index set corresponding to the vector $1_{n_l}/\sqrt{n_l}$. 
Let $s^{(j)} = |S_j|$ and $s=\sum_{j=1}^p s^{(j)} = s_1 + s_2$. We estimate $S_j$ by $\hat S_j = \mbox{support}(H\hat\beta_j)$ and $S$ by $\hat{S}=\bigcup \hat S_j.$ To facilitate theoretical analyses, we impose the following conditions.
\begin{assumption}
\label{cond1}
For any $\tau\in \Delta$ and $1\le i\le n$, 
the conditional density function of the random error $\epsilon_i(\tau)$ at $\tau$th quantile level, i.e., {$f_i^{(\tau)}(x)$} has a continuous derivative $(f_i^{(\tau)})'(x)$, and satisfies 
$
\sup_i\sup_x f_i^{(\tau)}(x) \le \bar{f} \quad \mathrm{and} \quad \sup_i|(f_i^{(\tau)})'(x)| \le \bar{f}' \quad  \mathrm{for}\quad |x| \le c_1
$
for some positive constants $c_1$, $\bar{f}$, and $\bar{f}'$. Moreover,
$
\inf_i f_i^{(\tau)}(0) \ge \underbar{f}
$
for some positive constant $\underbar{f}$.
\end{assumption}

\begin{assumption} 
\label{cond2}
Define the following restricted set:
\[
C(A, S_2^U) := \left\{\delta =(\delta_1^\top, \ldots, \delta_p^\top )^\top\in \mathbb{R}^{np}: \delta_j\in \mathbb{R}^n, \; \|\delta_{A^c}\|_1 + \sum_{j=1}^p \|(\ddot{H}\delta_j)_{S_{2j}^c}\|_1 \le 3\left(\|\delta_{A}\|_1 + \sum_{j=1}^p \|(\ddot{H}\delta_j)_{S_{2j}}\|_1 \right)\right\}.
\]
It holds that the design matrix $\tilde X$ satisfies
\[
\kappa=  \inf_{\delta \in C(A, S_2^U),\ \delta  \neq 0}  \frac{E_n[(\tilde x_i^\top \delta)^2]}{\|\delta\|^2} > 0, \quad
q :=\frac{3 \underbar{f}^{3/2}}{8 \bar{f}'}  \inf_{\delta \in C(A, S_2^U),\ \delta  \neq 0}  \frac{E_n[(\tilde x_i^\top \delta)^2]^{3/2}}{E_n[|\tilde x_i^\top \delta|^3]} > 0.
\]
\end{assumption}

\begin{assumption}
\label{cond3}
The minimum nonzero signal difference in $\beta^o_j$ is greater than the order of $O_p\left(\sqrt{s \log(np)/n} \right)$, i.e.,
$\min_{l \in S} | [H \beta^o_j]_l| \gtrsim \sqrt{s \log(np)/n}$.
\end{assumption}

{
\begin{assumption}
\label{cond4}
Let
$Q(\theta) = E\left[\frac{1}{n}\sum_{i=1}^n \rho_{\tau_i}\left(y_i - \tilde{x}_i^\top \theta \right)\right]$
and $\bar M= \int_0^1 \nabla^2 Q(\theta+ t(\hat{\theta}-\theta)) dt \in \mathbb{R}^{np \times np}$. 
We assume $ \vertiii{\bar M_{(S_1^U)^c,S_1^U}\bar  M_{ S_1^U,S_1^U}^{-1}}_1 \le 1$ and  $\vertiii{\dddot{H}_2}_{1} \vee \vertiii{\ddot{T}}_\infty \le 1$, where 
$\dddot{H}_2$ and $\ddot{T}$ are defined in the proof of Theorem \ref{mainthm2} in Section C of the Supplementary material.
\end{assumption}}

{ Assumption \ref{cond1} is a common assumption used in quantile regression literature, \cite{Belloni2011ell1, Zheng2015globally, SPXH2017} which} imposes only mild assumptions on the conditional
density of the response variable given covariates,  not imposing any normality or homoscedasticity assumptions. 
The first part in Assumption \ref{cond2} is the restricted eigenvalue (RE) condition, which is analogous to the assumptions in the existing literature. \cite{Belloni2011ell1,BRT2009,ECTT2007}
The second part in Assumption \ref{cond2} is the restricted nonlinear impact (RNI) condition, \cite{Belloni2011ell1,Zheng2015globally}  which controls the quality of minoration of the quantile
regression objective function by a quadratic function over the restricted set.
Assumption \ref{cond3} is a beta-min type condition, which imposes a lower bound of the nonzero signal differences. 
Assumption \ref{cond4} is a irrepresentable type condition, \cite{MB06,ZY06,Zheng2015globally} which restricts  correlations among covariates.

The mean squared error (MSE) of $\hat\theta$ is defined as $\|\hat\theta - \theta^o \|_n^2 := {n}^{-1} \|\hat{\theta} - \theta^o\|^2$. The following theorem presents upper bound of the MSE of $\hat\theta$. All the technical proofs are deferred to  Section C of the Supplementary material.

\begin{thm} \label{thmmain1}
Suppose that Assumptions \ref{cond1}-\ref{cond2} hold.  If $\lambda \asymp \sqrt{\log(pn)/n}$, then
 $\hat{\theta}$ satisfies
$
\|\hat\theta - \theta^o \|_n^2   \lesssim  {s  \log(pn)}/{n^2}.
$
\end{thm}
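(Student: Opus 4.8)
The plan is to follow the now-standard analysis of $\ell_1$-penalized quantile regression (in the spirit of Belloni and Chernozhukov), adapted to the reparametrized problem \eqref{eqmainprob} so that the fused penalty behaves like an ordinary generalized-$\ell_1$ penalty. Write $\delta = \hat\theta - \theta^o$, let $\hat Q(\theta) = n^{-1}\sum_{i=1}^n \rho_{\tau_i}(y_i - \tilde x_i^\top\theta)$ denote the empirical check loss, and let $P(\theta) = \lambda\sum_{j=1}^p(\|\theta_{j,B_1}\|_1 + \|\ddot{H}\theta_j\|_1)$ be the penalty. By optimality of $\hat\theta$, the basic inequality $\hat Q(\hat\theta) - \hat Q(\theta^o) \le P(\theta^o) - P(\hat\theta)$ holds. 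Since $\theta^o_{j,B_1}$ is supported on $S_{1j}$ and $\ddot{H}\theta^o_j$ on $S_{2j}$, the triangle inequality bounds the right-hand side by $\lambda\sum_j(\|\delta_{j,B_1,S_{1j}}\|_1 - \|\delta_{j,B_1,S_{1j}^c}\|_1) + \lambda\sum_j(\|(\ddot{H}\delta_j)_{S_{2j}}\|_1 - \|(\ddot{H}\delta_j)_{S_{2j}^c}\|_1)$, while the unpenalized mean block $B_2$ is kept aside (this is precisely why $A = S_1^U \cup B_2$ places $B_2$ in the free set).

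First I would linearize the loss via Knight's identity: with $\epsilon_i = y_i - \tilde x_i^\top\theta^o$ and $\psi_{\tau_i}(u) = \tau_i - 1\{u\le 0\}$,
\[
\hat Q(\hat\theta) - \hat Q(\theta^o) = -E_n\!\left[\psi_{\tau_i}(\epsilon_i)\,\tilde x_i^\top\delta\right] + E_n\!\left[\int_0^{\tilde x_i^\top\delta}\!\big(1\{\epsilon_i\le s\} - 1\{\epsilon_i\le 0\}\big)\,ds\right].
\]
The first term is a centered score $-W^\top\delta$ with $W = E_n[\psi_{\tau_i}(\epsilon_i)\tilde x_i]$; since each $\psi_{\tau_i}(\epsilon_i)$ is bounded and has mean zero under the quantile restriction of Assumption \ref{cond1}, a Bernstein/union-bound argument over the $np$ coordinates yields $\|W\|_{\max} \lesssim \sqrt{\log(pn)/n}$ with high probability, so the choice $\lambda \asymp \sqrt{\log(pn)/n}$ makes $|W^\top\delta|$ absorbable by $\tfrac{\lambda}{2}$ times the relevant $\ell_1$ terms. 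Combining this score bound with the penalty inequality, and discarding the nonnegative Knight remainder, gives the cone membership $\delta \in C(A,S_2^U)$ with the factor $3$ arising from $\lambda \gtrsim 2\|W\|_{\max}$ together with the triangle inequalities. This is exactly what licenses the use of Assumption \ref{cond2}.

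Next I would establish the quadratic minoration of the nonnegative remainder. Writing $F_i$ for the conditional c.d.f.\ of $\epsilon_i$ and taking expectations, Assumption \ref{cond1} gives $\int_0^{v}(F_i(s)-F_i(0))\,ds \ge \tfrac{\underbar{f}}{2}v^2 - \tfrac{\bar{f}'}{6}|v|^3$ with $v = \tilde x_i^\top\delta$, so the averaged remainder is at least $\tfrac{\underbar{f}}{2}E_n[(\tilde x_i^\top\delta)^2] - \tfrac{\bar{f}'}{6}E_n[|\tilde x_i^\top\delta|^3]$; the RNI constant $q$ in Assumption \ref{cond2} is calibrated so that, on $C(A,S_2^U)$, the cubic correction cannot swamp the quadratic term, yielding a lower bound of order $\underbar{f}\,E_n[(\tilde x_i^\top\delta)^2] \ge \underbar{f}\,\kappa\|\delta\|^2$ through the RE constant. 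After transferring from the expected to the empirical remainder by a maximal inequality restricted to the cone, I would chain this quadratic lower bound against the linear-in-$\delta$ upper bound of Step~1 together with the cone restriction (which controls $\|\delta_A\|_1 + \sum_j\|(\ddot{H}\delta_j)_{S_{2j}}\|_1 \lesssim \sqrt{s}\,\|\delta\|$) to obtain $\kappa\|\delta\|^2 \lesssim \lambda\sqrt{s}\,\|\delta\|$. Hence $\|\delta\|^2 \lesssim \lambda^2 s \asymp s\log(pn)/n$, and dividing by $n$ produces the stated $\|\hat\theta-\theta^o\|_n^2 \lesssim s\log(pn)/n^2$.

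The main obstacle is the quadratic minoration step: because $\rho_\tau$ is nonsmooth, one cannot Taylor-expand the loss and must instead argue that the Knight integral remainder remains genuinely quadratic, uniformly over the restricted cone, while simultaneously controlling its stochastic fluctuation. The RNI constant $q$ is the device that tames the cubic correction, but verifying that $\delta$ does not leave the region where the quadratic dominates—equivalently, handling the large- versus small-$\|\delta\|$ regimes—together with the empirical-process control of the remainder is the delicate part. By contrast, the fused-Lasso bookkeeping, namely matching the coordinatewise score bound to the split penalty on $\theta_{j,B_1}$ and $\ddot{H}\theta_j$ and carrying along the unpenalized block $B_2$, is routine but must be executed carefully.
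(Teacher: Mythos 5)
Your proposal follows essentially the same route as the paper's proof: the basic inequality plus a $\sup$-norm score bound of order $\sqrt{\log(pn)/n}$ gives cone membership in $C(A,S_2^U)$, the RE/RNI conditions give the quadratic minoration of the expected Knight remainder (the paper's Lemma~1), an empirical-process bound over the cone transfers this to the sample (Lemma~2), and chaining against $\lambda\sqrt{s}\,\|\delta\|$ yields the rate. The one step you flag as delicate but do not execute—keeping $\delta$ in the region where the quadratic dominates the cubic—is handled in the paper by the standard convexity localization: one argues by contradiction on the sphere $\|\delta\|_{2,n}=t$ with $t\asymp\sqrt{s\log(pn)/n}$, where the constraint $E_n[(\tilde x_i^\top\delta)^2]\le 16q^2$ needed for the minoration holds automatically.
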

Theorem \ref{thmmain1} implies that the MSE of $\hat\theta$ decreases asymptotically to zero as $n \to \infty$ assuming that $s$ grows with a rate as $o(n^2/ \log (pn))$. Note that such a growth rate for $s$ is satisfied when $|\{j \in \{1,\ldots, p\}: s^{(j)} \neq 0\}| = o(n/\log(pn))$, e.g., many $\beta^o_j$ are constant functions.
{ Because $\|\hat\beta_j - \beta_j\|^2 =(\hat\theta_j - \theta_j)^\top  { \bar{H}^\top \bar{H}}
(\hat\theta_j - \theta_j)$,  where $\bar H$ is defined in  Section A of the Supplementary material, Theorem  \ref{thmmain1} also gives the MSE of $\hat \beta_j$'s as follows:
\[
\sum_{j=1}^p \|\hat\beta_j - \beta_j\|_n^2 := \sum_{j=1}^p  \|\hat\beta_j - \beta_j\|^2/n \le \lambda_{\max}\left( { \bar{H}^\top \bar{H}}\right) \sum_{j=1}^p \|\hat\theta_j - \theta_j\|^2 /n
\lesssim  \lambda_{\max}\left( { \bar{H}^\top \bar{H}}\right) {s  \log(pn)}/{n^2}.
\]
}
{ 
If $\lambda_{\max}\left(  \bar{H}^\top \bar{H}\right)=O_p(n)$, the MSE rate is bounded by $s \log (pn)/{n}$, which is within logarithmic factors of the oracle rate that can be obtained with known $S$.
}


The following theorem shows that RQF detects the underlying true set $S$.
\begin{thm}\label{mainthm2}
Suppose that Assumptions \ref{cond1}-\ref{cond4} holds.  If $\lambda \asymp \sqrt{\log(pn)/n}$ and $s_1 \lesssim n \log (pn)$, then
\[
P( \hat{S} = S) \to 1.
\]
\end{thm}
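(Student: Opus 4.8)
The plan is to use a primal--dual witness (PDW) argument adapted to the non-smooth quantile loss and the two-layer penalty in \eqref{eqmainprob}. First I would write the subgradient (KKT) optimality conditions for \eqref{eqmainprob}: because the check loss is piecewise linear, its subdifferential at $y_i - \tilde{x}_i^\top\theta$ is $\psi_{\tau_i}(\cdot) = \tau_i - 1\{\cdot < 0\}$ away from the kink and an interval at a kink, while the two $\ell_1$ penalties contribute sign-subgradients on $\theta_{j,B_1}$ and on $\ddot{H}\theta_j$. I would then construct an oracle solution by restricting the optimization to the correct support pattern --- forcing $(\theta_{j,B_1})_{S_{1j}^c}=0$ and $(\ddot{H}\theta_j)_{S_{2j}^c}=0$ --- and solving only over the active block indexed by $A=S_1^U\cup B_2$ together with the active edges in $S_2^U$. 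The goal is to show this oracle solution is, with probability tending to one, the actual minimizer of the full problem and has support exactly $S$, from which $P(\hat S=S)\to 1$ follows after translating back through $\hat\beta^j_{G_l}=[\tilde H_l^{(1)}]^{-1}\hat\theta^j_{G_l}$.

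The argument then splits into a ``no false negatives'' part and a ``no false positives'' part. For no false negatives I would combine the estimation bound of Theorem \ref{thmmain1}, which gives $\|\hat\theta-\theta^o\|\lesssim\sqrt{s\log(pn)/n}$, with the beta-min condition (Assumption \ref{cond3}): since the minimal nonzero difference in $\beta^o_j$ dominates this error rate, the active components of the oracle estimator cannot change sign or collapse to zero, so every true edge in $S$ is retained. For no false positives I must verify strict dual feasibility, namely that for every inactive coordinate the implied penalty subgradient lies strictly inside $(-1,1)$. Linearizing the quantile score around $\theta^o$ replaces the usual fixed Hessian by the integrated Hessian $\bar M=\int_0^1\nabla^2 Q(\theta^o+t(\hat\theta-\theta^o))\,dt$; the inactive subgradient is then governed by the block $\bar M_{(S_1^U)^c,S_1^U}\bar M_{S_1^U,S_1^U}^{-1}$ and by the transformed penalty operators $\dddot H_2$ and $\ddot T$. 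Assumption \ref{cond4} is exactly the irrepresentable-type bound $\vertiii{\bar M_{(S_1^U)^c,S_1^U}\bar M_{S_1^U,S_1^U}^{-1}}_1\le 1$ together with $\vertiii{\dddot H_2}_1\vee\vertiii{\ddot T}_\infty\le 1$, which keeps the deterministic part of each inactive subgradient at or below one.

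The remaining stochastic piece is the score $\tfrac1n\sum_i\tilde x_i\psi_{\tau_i}(\epsilon_i(\tau_i))$ evaluated at $\theta^o$. Since $\psi_{\tau_i}$ is bounded and $E[\psi_{\tau_i}(\epsilon_i)\mid x_i,t_i]=0$ under the model together with Assumption \ref{cond1}, I would apply a Bernstein/maximal inequality over the $O(pn)$ coordinates to show this term is $O_p(\sqrt{\log(pn)/n})$ uniformly. The choice $\lambda\asymp\sqrt{\log(pn)/n}$ then makes the stochastic contribution to each inactive subgradient strictly smaller than the slack left by Assumption \ref{cond4}, yielding strict feasibility and hence uniqueness of the support-recovering solution; the condition $s_1\lesssim n\log(pn)$ is used here to keep the size of the active index set and the linearization remainder negligible relative to these bounds.

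The step I expect to be the main obstacle is establishing strict dual feasibility in the presence of the non-smooth loss. Unlike smooth $M$-estimation, the score is only a set-valued subgradient at the kinks, so the linearization that introduces $\bar M$ must be justified with an explicit remainder bound, controlled by the restricted nonlinear impact part of Assumption \ref{cond2} and the density regularity in Assumption \ref{cond1}, and the two penalties must be handled simultaneously through the change of variables $\beta^j_{G_l}=[\tilde H_l^{(1)}]^{-1}\theta^j_{G_l}$, whose bookkeeping ties the edge-difference support $S_j$ to the coordinate supports $S_{1j}$ and $S_{2j}$. Forcing the interaction of these transformed operators $\dddot H_2$ and $\ddot T$ to respect the single irrepresentable budget of Assumption \ref{cond4} is the delicate core of the proof.
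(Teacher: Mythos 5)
Your proposal follows essentially the same route as the paper's proof: an oracle/primal--dual witness construction restricted to the true supports $S_1^U$ and $S_2^U$ (with the $\dddot{H}_2\theta=0$ constraint handled via a null-space parametrization), KKT verification of dual feasibility on the inactive set using the integrated Hessian $\bar M$ and the irrepresentable bounds of Assumption \ref{cond4}, the empirical-process control of the score at rate $\sqrt{s_1\log(pn)/n}$, and the beta-min condition of Assumption \ref{cond3} to rule out false negatives. The structure, the key quantities ($\bar M_{(S_1^U)^c,S_1^U}\bar M_{S_1^U,S_1^U}^{-1}$, $\dddot{H}_2$, $\ddot T$), and the role of each assumption all match the paper's argument.
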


Theorem \ref{mainthm2} implies that RQF can detect the underlying subclusters
{  
in the graph $G$ constructed from the points $(t_i, \tau_i),\; i=1,\ldots, n$
as long as any nodes in the same underlying subcluster are connected in the graph $G$.
}



\subsection{ADMM algorithm}
The optimization of \eqref{eqmain} can be computed using the ADMM algorithm. 
Let $\beta_{ji} = \beta_{j}(t_i, \tau_i)$, $\beta_j = [\beta_{j1},\ldots, \beta_{jn}] \in \mathbb{R}^n$, and $\beta= [\beta_1^\top,\ldots, \beta_p^\top]^\top \in \mathbb{R}^{np}$.
Then, we can rewrite the optimization as follows by introducing supplementary
variables $z_j = [z_{j1}, \ldots, z_{jn}]^\top \in \mathbb{R}^n$ and $z = [z_1^\top,\ldots, z_p^\top]^\top \in \mathbb{R}^{np}$,
 \[
\min_{\beta, z \in \mathbb{R}^{np}} \sum_{i=1}^n \rho_{\tau_i}\left(y_i -  \sum_{j=1}^p x_{ij} \beta_{ji} \right)  + \lambda  \sum_{j=1}^p \|H z_j\|_1, \ \mbox{where}\ z_j = \beta_j \quad \mbox{for}\quad j=1,\ldots, p.
\]

By the augmented Lagrangian method, we consider the following
\begin{eqnarray}
 \sum_{i=1}^n \rho_{\tau_i}\left(y_i -  \sum_{j=1}^p x_{ij} \beta_{ji}\right) + \lambda  \sum_{j=1}^p \|H z_j\|_1
+ \frac{\eta}{2} \sum_j \|\beta_j-z_j+u_j\|^2, \label{eqmainopt}
\end{eqnarray}
where $u_j = [u_{j1},\ldots, u_{jn}]^\top \in \mathbb{R}^n$ are the dual variables and $\eta >0$ is a step-size parameter. 
{In \eqref{eqmainopt}, we need to update $\beta$, $z$, and $u_j$'s. Updating $\beta$ and $z$ requires some mathematical derivations, but dual variables $u_j$ will be simply updated according to the updated $\beta$ and $z$.}
Let $F(\beta, z, u)$ be the objective function of  \eqref{eqmainopt}.
We iteratively solve \eqref{eqmainopt} as follows:
\begin{eqnarray}
\beta^{(t)} &=& \argmin_{\beta}  F(\beta,z^{(t-1)},u^{(t-1)}) \label{sup:solve:1}\\
z^{(t)} &=& \argmin_{z}  F(\beta^{(t)},z,u^{(t-1)}) \label{sup:solve:2}\\
u_j^{(t)} &=&  u_j^{(t-1)}+  \eta(\beta_j^{(t)} - z_j^{(t)})  \quad \mbox{for}\quad j=1,\ldots, p.\nonumber
\end{eqnarray}
   
For each step, we omit the superscript notations $(t)$ and $(t-1)$ 
whenever it does not cause any confusion.

\vspace{1cm}

\noindent
\textbf{Update $\beta$}\\
{For each $i=1,\ldots, n$, define $\beta^i := [\beta_{1i}, \ldots, \beta_{pi}]^\top$, $z^i := [z_{1i}, \ldots, z_{pi}]^\top$ and $u^i := [u_{1i}, \ldots, u_{pi}]^\top$ as  $p$-dimensional vectors.
For each $i=1,\ldots, n$, solving \eqref{sup:solve:1} is equivalent to solving the following:}
\[
\min_{\beta^i} \rho_{\tau_i}(y_i - x_i^\top \beta^i) + \frac{\eta}{2} \|\beta^i - z^i + u^i\|^2.
\]
By the Karush-Kuhn-Tucker (KKT) conditions, minimizer $\hat{\beta}^i$ must satisfy
\[
-v_i x_i + \eta(\hat{\beta}^i - z^i + u^i) = 0\ \mbox{ or } \  \hat{\beta}^i = (z^i - u^i) + v_i x_i / \eta,
\]
where 
\[
v_i  =
\begin{cases}
\tau_i - 1 & \mbox{if}\ y_i < x_i^\top \hat{\beta}^i \\
\tau_i  & \mbox{if}\ y_i > x_i^\top \hat{\beta}^i \end{cases}
\]
\[
v_i \in [\tau_i-1, \tau_i]  \mbox{~if}\  y_i =x_i^\top \hat{\beta}^i. 
\]
Suppose that $y_i =x_i^\top \hat{\beta}^i$.  Then, it must hold that
\[
-\tau_i  x_i^\top x_i / \eta  \le  x_i^\top(z^i - u^i) - y_i = - v_i x_i^\top x_i / \eta \le (1-\tau_i)  x_i^\top x_i / \eta,
\]
which implies that  if $x_i^\top(z^i - u^i) - y_i  \in [-\tau_i  x_i^\top x_i/\eta, (1-\tau_i)  x_i^\top x_i / \eta]$, then
 $\hat{\beta}^i = (z^i - u^i) + \hat v_i x_i / \eta$, where $\hat v_i = -\eta (x_i^\top(z^i - u^i) - y_i) / (x_i^\top x_i)$.
Similarly, we can consider the remaining cases and  obtain the following updates:
\[
\hat\beta^i = 
\begin{cases}
(z^i - u^i) + (\tau_i-1) x_i / \eta & \mbox{if}\   x_i^\top(z^i - u^i) - y_i  >  (1-\tau_i)  x_i^\top x_i/\eta\\
 (z^i - u^i) + \tau_i x_i / \eta  & \mbox{if}\   x_i^\top(z^i - u^i) - y_i  < -\tau_i  x_i^\top x_i/\eta\\
(z^i - u^i) + \hat v_i x_i / \eta & \mbox{else}.
\end{cases}
\]
This can be efficiently computed via a parallel implementation.

\vspace{1cm}

\noindent
\textbf{Update $z$}\\
For each $j=1,\ldots, p$, recall that $z_j = [z_{1j}, \ldots, z_{nj}]^\top$.
{Then, for each $j=1,\ldots, p$, solving \eqref{sup:solve:2} is equivalent to solving the following:}
\[
\min_{z_j} \frac{\eta}{2} \|z_j - \beta_j - u_j\|^2 + \lambda \|H z_j\|_1,
\]
{ which corresponds to the KNN-fused Lasso \cite{PSCW2020} and can be solved by the parametric max-flow algorithm.  \cite{ACJD2009,PSCW2020, SSYOHMP2021}}
This also allows parallel implementation.\\

 \begin{figure}[t] \label{figrplot3true}
 \centering
\includegraphics[width=0.7\textwidth, height=7cm]{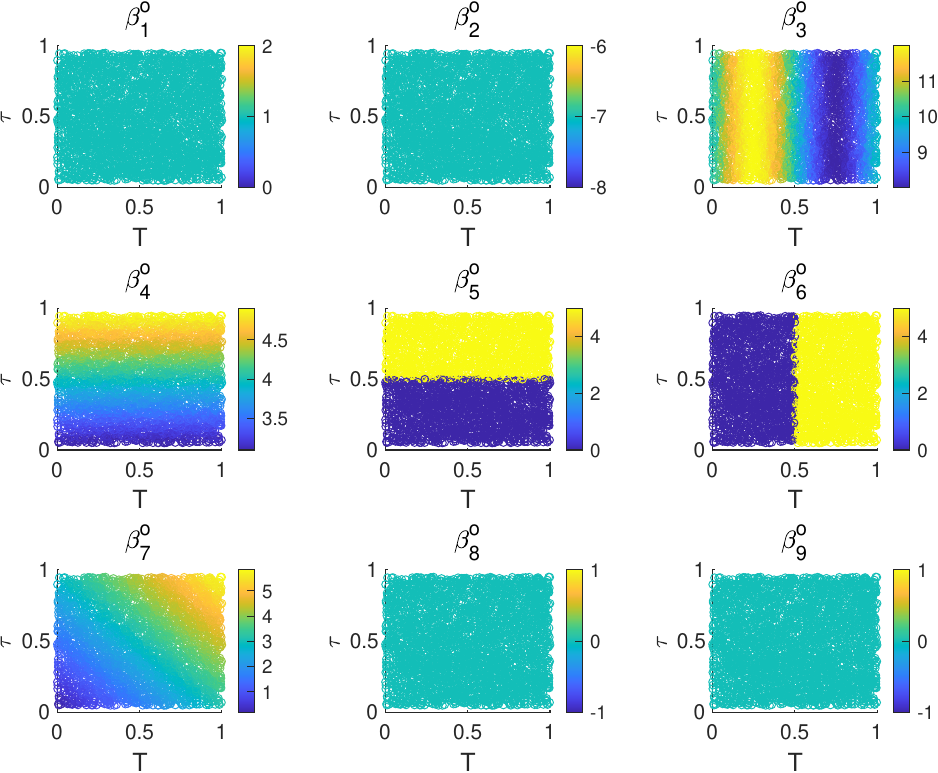}\\
\caption{Underlying coefficient function $\beta_j(t,\tau)$ for $j=1,\ldots, 9$. The x-axis and y-axis represent the index $T \in (0,1)$ and the quantile level $\tau \in (0.05, 0.95)$, respectively.}
\end{figure}  

 \begin{figure}[b] \label{figrplot3}
\centering
\includegraphics[width=0.7\textwidth, height=7cm]{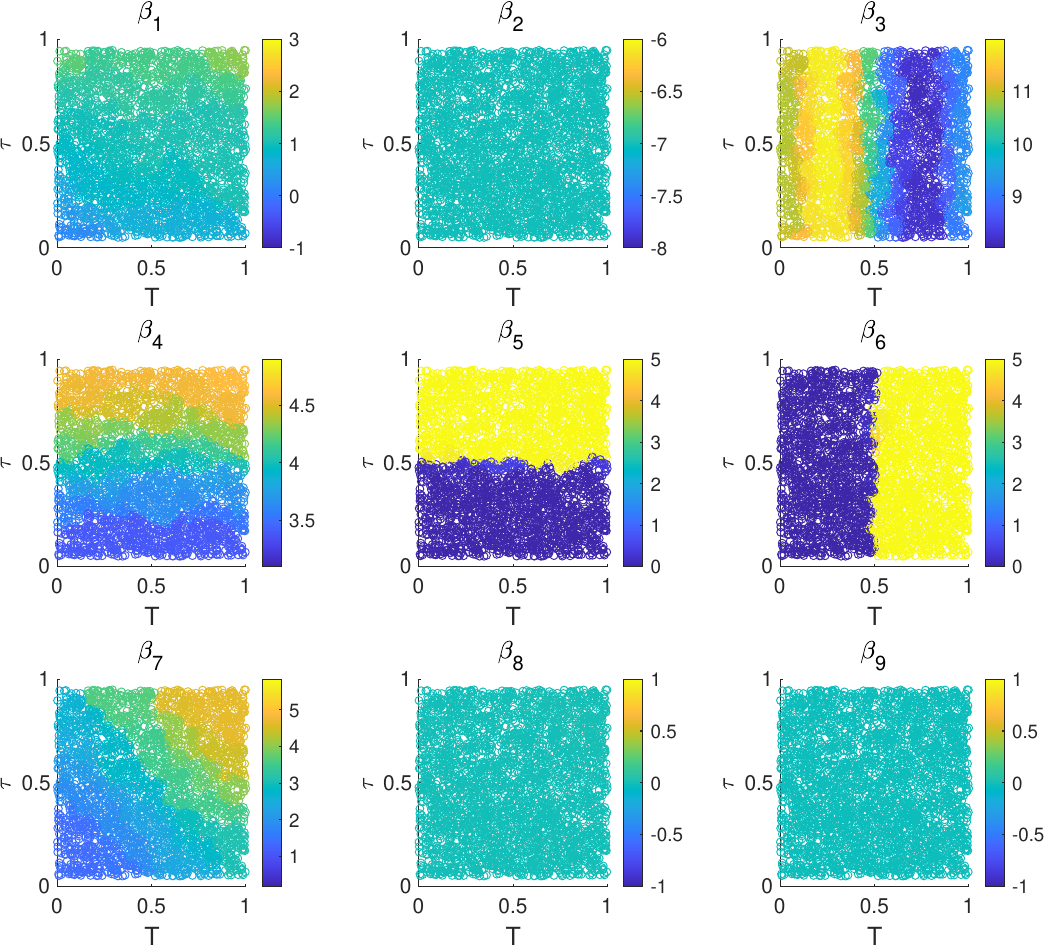}\\
\caption{Estimated coefficient function $\hat \beta_j(t,\tau)$ for $j=1,\ldots, 9$, produced from a particular simulation when  $d=9$.}
\end{figure}

 \begin{figure}[t] \label{figrplot33}
\centering
\includegraphics[width=0.7\textwidth, height=7cm]{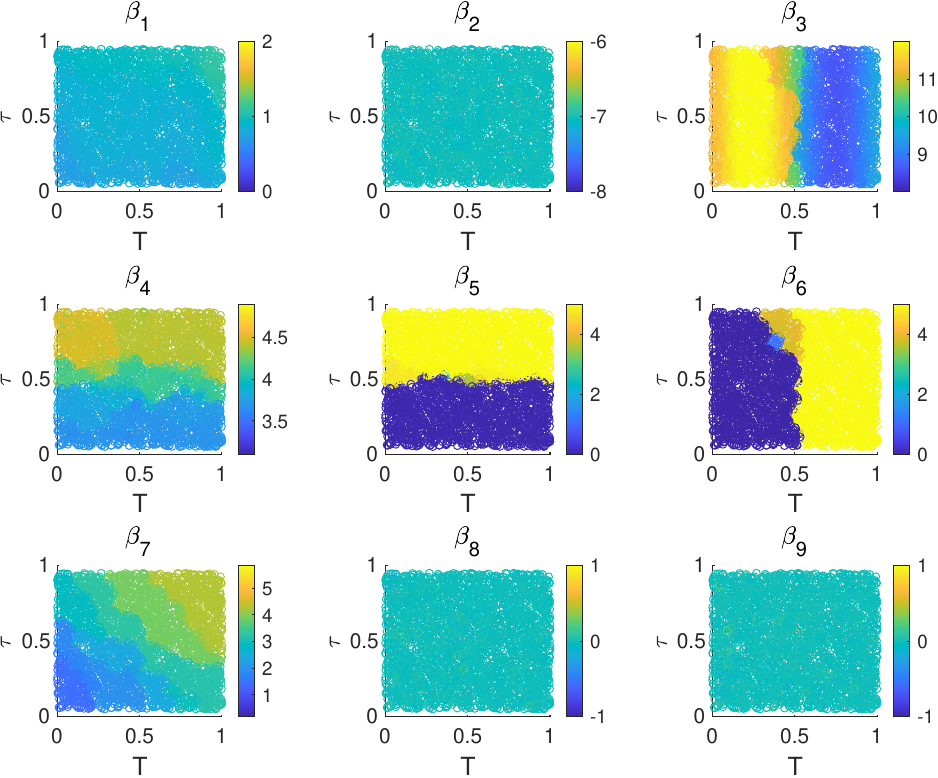}\\
\caption{Estimated coefficient function $\hat \beta_j(t,\tau)$ for $j=1,\ldots, 9$, produced from a particular simulation    with $d =25$.}
\end{figure}  

 \begin{figure}[b] \label{figrplot44}
 \centering
\includegraphics[width=0.7\textwidth, height=7cm]{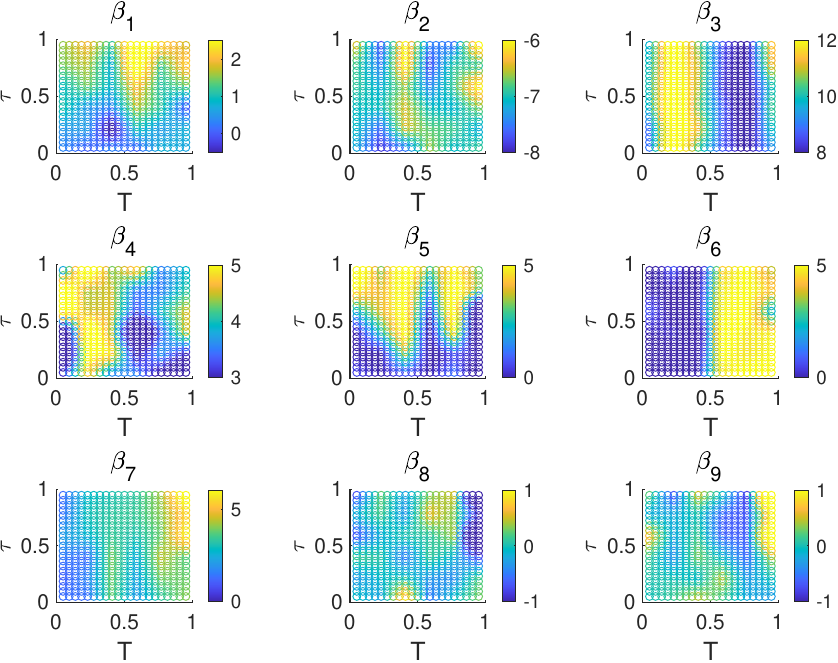}\\
\caption{An example of the estimated coefficient function of the B-spline methods  with   $d = 9$.}
\end{figure}

 \begin{figure}[t] \label{figrplot55}
 \centering
\includegraphics[width=0.7\textwidth, height=8cm]{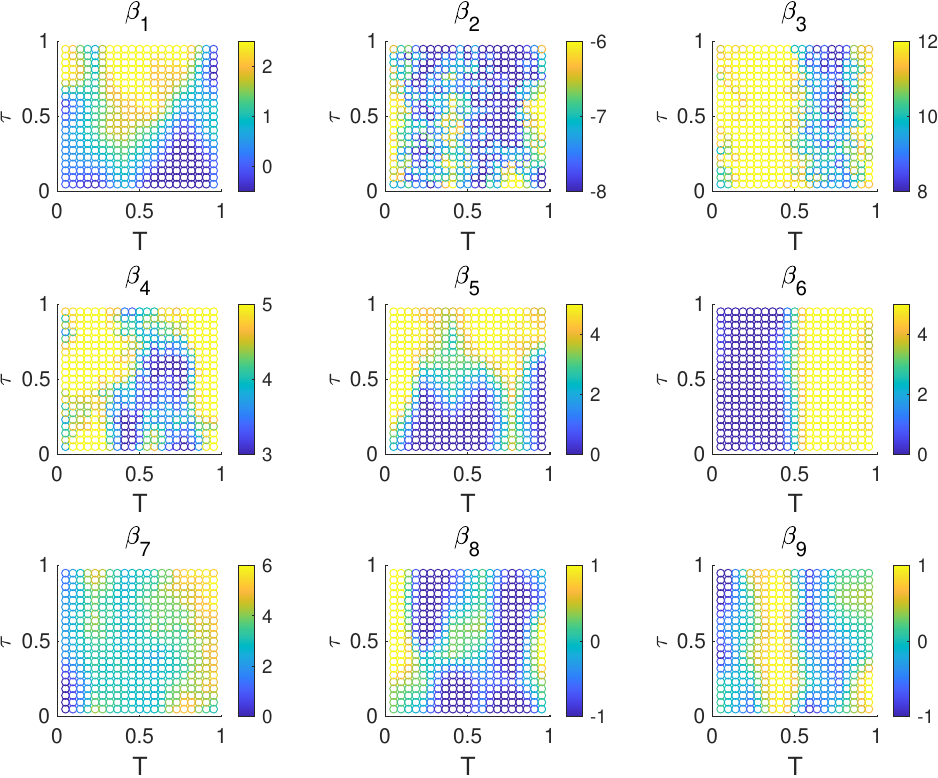}\\
\caption{An example of the estimated coefficient function of the B-spline methods with  $d =25$.}
\end{figure}

\section{Simulation Studies}
In this section, we consider simulation studies to illustrate the performance of RQF. We set $K=5$ { for sufficient information and efficient computation,} as suggested in Padilla et al. \cite{PSCW2020} {and Ye and Padilla \cite{SSYOHMP2021}.}
To examine the performance of RQF in capturing  patterns under various scenarios, we design the case in which the underlying VC coefficients have different clustered or varying patterns. 
For comparisons, we  consider the sieve estimation method using B-spline (Bspline), which is a common nonparametric approach. Specifically, Bspline approximates $\beta_j(t,\tau)$ by using bivariate B-spline functions $\Pi(\tau,t) = [\pi_1(\tau,t),\ldots, \pi_{(m_n+2)^2}(\tau,t)]^\top$, where $\pi_k(\tau,t)$ is the product of two normalized B-spline basis functions of order $2$ with $m_n$ quasi-uniform knots over the region $\Delta$ and $T$, i.e., $\beta_j(\tau,t)  \approx B_j \Pi(\tau,t)$, where $B_j$ is estimated via the composite quantile regression framework \cite{SPEL2021}  { and $m_n$ is chosen using the Bayesian information criterion (BIC).\cite{SPEL2021}} 


We consider the following varying random coefficient model: 
\begin{eqnarray*}
Y_i &=& 1 - 7 X_{i2}  + (10 + 2 \sin(2\pi t_i)) X_{i3} + (3+2U_i) X_{i4} + 5 \times 1\{U_i>0.5\} X_{i5} + 5 \times 1\{U_i>0.5\} X_{i6} \\
&&+  3(U_i + t_i) X_{i7},
\end{eqnarray*}
where  $d$ is the number of covariates, 
 $t_i, X_{i3}, X_{i6}, X_{i7} \sim \mbox{uniform}(0,1)$, $X_{i2},  X_{i8},\ldots, X_{id} \sim N(0,1)$, $X_{i4}, X_{i5}$ are from the Bernoulli distribution with probability $1/2$, and  $U_i \sim \mbox{uniform}(0,1)$ is introduced to consider a random coefficient model.
Accordingly, the underlying quantile coefficient functions, given the index $t$ and the quantile level $\tau$,  are
\begin{eqnarray*}
&& \beta_1(t,\tau)=1,\ \beta_2(t,\tau) = -7,\ \beta_3(t,\tau) = 10 + 2\sin(2\pi t),\ \beta_4(t,\tau) = 3+ 2\tau,  \\
&& \beta_5(t,\tau) = 5 \times 1\{\tau>0.5\},\ \beta_6(t,\tau) = 5 \times 1\{t>0.5\},\ \beta_7(t,\tau) = 3t + 3\tau,\   \beta_8(t,\tau) = \cdots = \beta_d(t,\tau) = 0. 
\end{eqnarray*}
 We use $n =5,000$   and $d \in \{9,25\}$ in the implementation, and the quantile levels $\tau_i$ are i.i.d. generated from [0.05, 0.95].

Figure 1 depicts the underlying coefficient functions $\beta_j(t,\tau)$ for $j=1,\ldots, 9$.  
We  observe that $\beta_3(t,\tau),\beta_4(t,\tau),$ and $\beta_7(t,\tau)$ have smoothly varying patterns; 
$\beta_5(t,\tau)$ and  $\beta_6(t,\tau)$ are clustered with respect to quantile $\tau$ and time $t$;  $\beta_1(t,\tau)$ and $\beta_2(t,\tau)$ are positive and negative, constant value, respectively; and $\beta_8(t,\tau)$ and $\beta_9(t,\tau)$ are zeros. Figures 2 and 3 are the estimated coefficient function $\hat{\beta}_j(t,\tau)$ derived by RQF produced from a particular simulation when  $d=9$ and $d=25$, respectively.
Figures 4 and 5 are the estimated coefficient function derived by Bspline from a particular simulation  when     $d=9$ and $d=25$, respectively.
We can observe that the overall patterns of RQF are highly consistent with the true regression coefficients, as shown in Figure 1. 
It successfully captures the underlying clustered patterns and smoothly varying patterns in the regression coefficients and also detects the abrupt changes across the boundaries of adjacent clusters.
However, the estimates from Bspline are quite noisy, with artificially abrupt changes in coefficient values in some parts of the domain.

We further examine the performances of RQF in terms of parameter estimation   using  $d \in \{9,25\}$. As a performance measure, we consider the mean-squared error of estimation (MSE), defined as
\[
\mbox{MSE} = \frac{1}{n} \sum_{i=1}^n \sum_{j=1}^p \{\hat \beta_j(t_i, \tau_i) - \beta_j^o(t_i,\tau_i)\}^2.
\]
{
For an index $j \in \{1,2,5,6,8,9\}$, each of the underlying coefficient functions $\beta^o_j$'s has clustered structures. Specifically, $\beta^o_1$, $\beta^o_2, \beta^o_8, \beta^o_9$ have single values, i.e., has only one cluster, and $\beta^o_5, \beta^o_6$ have two subclusters. On the other hand, an index $j \in \{3,4,7\}$ has smoothly varying structures.
For an index $j \in \{1,2,5,6,8,9\}$ involving clustered structures, we  measure the structural consistency performance  when $d \in \{9,25\}$. 
Let $S_j =\{(i,i'):\ \beta_j^o(t_i,\tau_i) \neq \beta_j^o(t_{i'},\tau_{i'})\}$ and $\hat S_j =\{(i,i'):\ \hat\beta_j(t_i,\tau_i) \neq \hat\beta_j(t_{i'},\tau_{i'})\}$.
If $|S_j| \neq 0$, i.e., $j \in \{5,6\}$, we consider the Precision and Recall for each $j$, defined as
\[
\mbox{Precision} =  |S_j \cap \hat{S}_j|/ |\hat{S}_j|\quad\mbox{and}\quad \mbox{Recall}=|S_j \cap \hat{S}_j|/ |S_j|.
\]
On the other hand, if $|S_j|=0$, i.e., $j \in \{1,2,8,9\}$,} we consider true negative rate (TNR) and negative predictive value (NPV), defined as
\[
\mbox{TNR} = |S_j^c \cap \hat{S}_j^c| /|S_j^c|\quad\mbox{and}\quad \mbox{NPV} = |S_j^c \cap \hat{S}_j^c| /|\hat S_j^c|
\]

\begin{table}[!h] \centering 
  \caption{Mean precision, recall, TNR, and NPV for RQF  and Bspline over 100 simulations,  under $d \in \{9,25\}$.}
  \label{tabs0} 
\begin{tabular}{llllll} \hline
\multirow{2}{*}{\bf Method} & \multirow{2}{*}{\bf Coefficient} & \multicolumn{2}{c}{ $d =9$ } &  \multicolumn{2}{c}{$d =25$} \\ \cline{3-6}
& & {\bf Precision}& {\bf Recall} & {\bf Precision}& {\bf Recall}\\
\toprule
\multirow{2}{*}{\bf RQF} & $\beta_5$ & 0.946 & 0.952 & 0.923 & 0.918\\
& $\beta_6$ & 0.952 & 0.953 & 0.921 & 0.913\\\hline
\multirow{2}{*}{\bf {Bspline}} & $\beta_5$ & 0.504 & 0.991 & 0.521 & 0.971\\
& $\beta_6$ & 0.510 & 0.994 & 0.531 & 0.981\\
\midrule
& & {\bf TNR}& {\bf NPV} & {\bf TNR}& {\bf NPV}\\
\midrule
\multirow{3}{*}{\bf RQF} 
& $\beta_1$ & 0.996 & 0.995 & 0.945 & 0.952 \\
& $\beta_2$ & 0.991 & 0.994 & 0.939 & 0.941 \\
& $\beta_8$ & 0.985 & 0.983 & 0.941 & 0.938\\
& $\beta_9$ & 0.989 & 0.990 & 0.950 & 0.941\\\hline 
\multirow{3}{*}{\bf {Bspline}} 
& $\beta_1$ & 0.009 & 1.000 & 0.011 & 1.000 \\
& $\beta_2$ & 0.012 & 1.000 & 0.015 & 1.000 \\
& $\beta_8$ & 0.018 & 1.000 & 0.023 & 1.000\\
& $\beta_9$ & 0.012 & 1.000 & 0.024 & 1.000\\
\bottomrule
\end{tabular} 
\end{table}


As shown in Table \ref{tabs0}, all the Precision, Recall, TNR, and NPV values {for RQF} are close to 1, which implies that RQF has a high structural consistency for each covariate $j$. {Even with the large $d$, RQF presents robust results. See  Section B of the Supplementary material for detailed results.}
On the other hand, Bspline demonstrates poor performance in  TNR and Precision  because Bspline  tends to yield more  false positives. This implies that Bspline does not capture the underlying structures of the model well.
Regarding the inferior performance of the Bspline method compared to our proposed KNN-based Lasso method, we  note that the differences in performance mainly stem from the fused Lasso term in our proposed method. This term shrinks differences between neighboring points $(t_i, \tau_i)$ and $(t_j, \tau_j)$ if they are close. This shrinkage effect cannot be easily implemented in the Bspline method and, to our knowledge, is not considered in the existing literature.

{We also investigate the sensitivity of the proposed method with respect to the choice of quantile levels $\tau_i$'s. Using the same $t_i$'s as used in $\hat\beta$,
we consider $500$ different choices of $\tau_i$'s generated from $\mbox{uniform}(0.05,0.95)$ and obtain estimates $\tilde \beta^{(b)}_j$'s for $b=1,\ldots,500$ to compare  with $\hat\beta_j$   when  $d =9$ and $d=25$, respectively. 
Then, we record the mean squared difference (MSD) between $\hat \beta_j$ and $\tilde \beta^{(b)}_j$ at the 9,000 fixed points $(t_k,\tau_l)$, where $t_k=0.01k$ for $k=1,\ldots, 100$ and $\tau_l=0.05+0.01l$ for $l=1,\ldots, 90$, defined as
\[
\mbox{MSD}(b) = \frac{1}{9000p} \sum_{k=1}^{100} \sum_{l=1}^{90} \sum_{j=1}^p \{\hat \beta_j(t_k, \tau_l) - \tilde \beta^{(b)}_j(t_k,\tau_l)\}^2,
\]
where $\hat\beta_j(t_k,\tau_l)$'s and $\tilde\beta_j(t_k,\tau_l)$'s are computed  by \eqref{eqneighbor}.
Figure B2 in the Supplementary material shows the boxplots of MSD  obtained from $500$ different quantile choices. We observe that most MSD values are close to 0,  which implies that the obtained coefficients are not highly sensitive to the choices of $\tau_i$'s. }


Let $\hat\beta^{(K)}_j$ be the proposed estimate using $K$ nearest neighbors in the estimation. 
 { To perform the sensitivity analyses of the proposed method for the choice of $K$, we compute the MSD between 
$\hat\beta^{(K)}_j$ and $\hat\beta^{(\tilde K)}_j$  at the  fixed points $(t_k,\tau_l)$, where $t_k=0.01k$ ($k=1,\ldots, 100$) and $\tau_l=0.01l+0.05$ ($l=1,\ldots, 90$).
That is,
\[
\mbox{MSD}(K,\tilde K) = \frac{1}{9000p} \sum_{k=1}^{100} \sum_{l=1}^{90} \sum_{j=1}^p \{\hat \beta^{(K)}_j(t_k, \tau_l) - \hat \beta^{(\tilde K)}_j(t_k,\tau_l)\}^2.
\]
Figure B3 in  the Supplementary material  presents the heatmap of $\mbox{MSD}(K,\tilde K)$ with $K,\tilde K \in \{2,\ldots, 12\}$ when  $d =9$ and $d=25$. The proposed method does not seem to be heavily impacted by  the  choice of $K$.} 

Another alternative is to consider multiple predetermined quantile levels $\tau$ for each $t_i$ instead of a single, randomly chosen quantile level. Specifically, to demonstrate it in our simulation, we used a setting with $d=9$ and selected a predetermined grid of quantile levels $\tau$ for each $t_i$. This resulted in $(t_i, \tau_1),\ldots, (t_i,\tau_{19})$ for each $i$, where $\tau_k = 0.05 k$ for $k=1,\ldots, 19$. As a result, a total of $n \times 19 = 5000 \times 19 = 95,000$ points were used to generate the 2D plot depicted in Figure B4 of the Supplementary material.
It is  worth noting that using multiple predetermined quantile levels for each $t_i$ necessitates the estimation of more parameters, thereby increasing computational time compared to the proposed method that uses randomly selected quantile levels. This latter approach employs only a single, randomly chosen quantile level, denoted as $\tau_i$, for each $t_i$, but still manages to yield similar patterns in the 2D plot.
However, the advantage of predetermined quantile levels is their ability to focus on specific quantile regions. If there is  particular interest in these regions, more quantile levels can be preselected specifically for those areas.

\section{Empirical illustrations} \label{secBMI}
\subsection{Time-varying and heteroscedastic effects of risk factors on BMI}

 \begin{figure}[b] \label{fig_BIC}
 \centering
\includegraphics[width=1\textwidth, height=14cm]{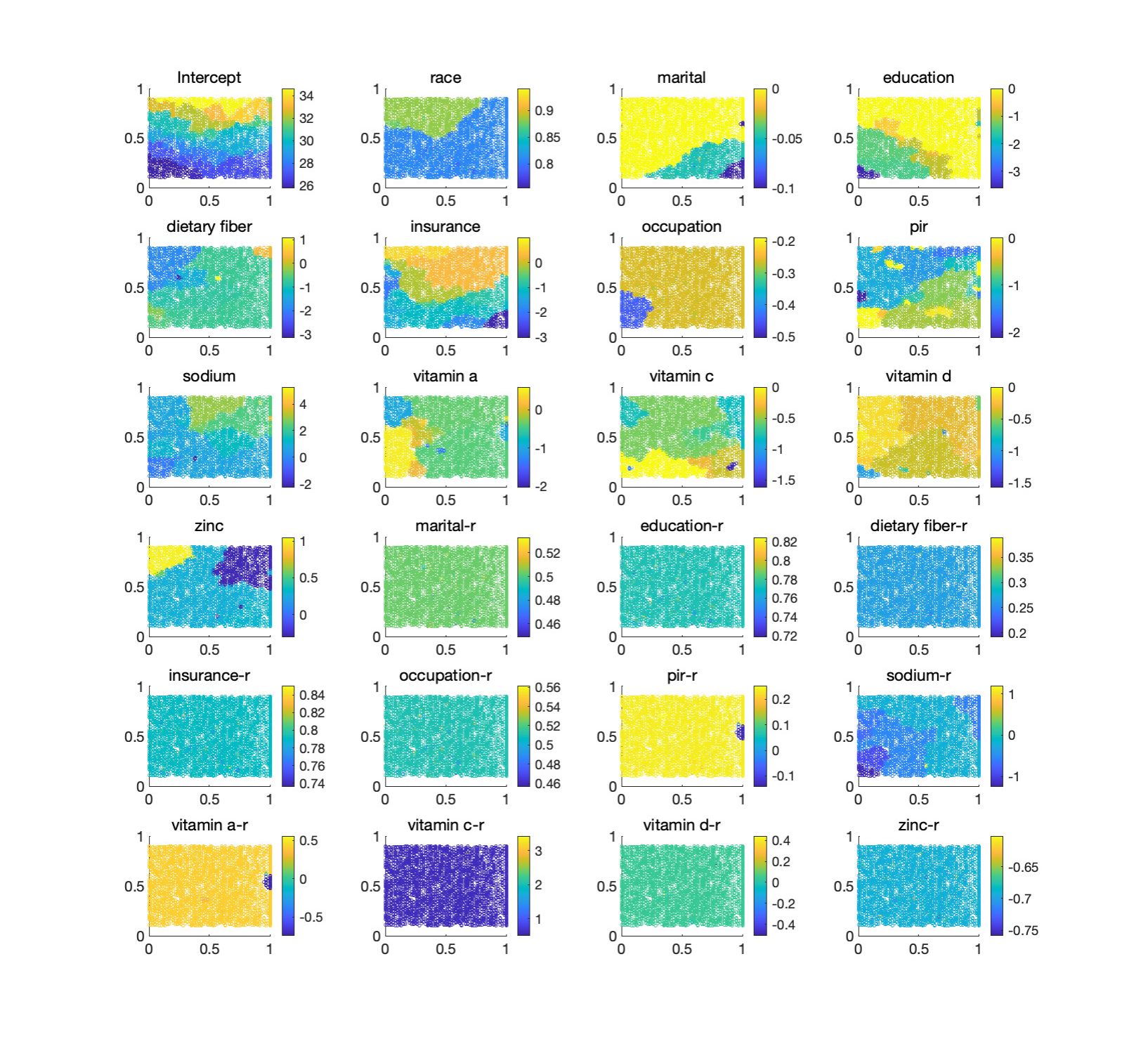}\\
\caption{{The estimated} quantile coefficient functions using the BMI data. The $x$-axis and $y$-axis represent
the {age} index $T \in (0,1)$ and the quantile level $\tau \in (0.1, 0.9)$, respectively.
{On top of the figures, `$x$-r'} represents an interaction between  a risk factor `$x$' and the `race.'}
\end{figure}

Body mass index (BMI), which is a measure of body fat based on height and weight, has been shown to be associated with many health status indicators. \cite{weisell2002body,dobbelsteyn2001comparative,vargas2007relationship}  While there has been much interest in developing statistical methods for measuring time-varying effects of risk factors on BMI, there have been few studies from the quantile perspective. Moreover, classical local quantile approaches are not well-suited for visualizing the relationship between BMI and covariates as age-quantile functions, unlike regional quantile approaches.

In this section, we utilized the RQF to investigate the relationship between risk factors and different sublevels of the BMI distribution among women, and to examine the potential variation of this relationship across age. Furthermore, we explored the health disparity between non-Hispanic Whites (NHW) and non-Hispanic Blacks (NHB) by examining the interaction between risk factors and racial groups. The data for our analysis were obtained from the 2011-2018 National Health and Nutrition Examination Survey (NHANES) dataset, and we considered 12 covariates identified in the literature as potential risk factors for BMI. These covariates include physical condition factors such as dietary fiber, sodium level, vitamin A, vitamin C, vitamin D, and zinc; socioeconomic factors such as education (1 if college, 0 otherwise), occupation (1 if yes and 0 if no), insurance (1 if yes and 0 if no), and poverty-income ratio (PIR) which ranges between 0 (lowest income level) and 1 (highest income level); and demographic information such as marital status (1 if married, 0 otherwise) and race (1 if NHB and 0 if NHW). After removing subjects with missing variables, a total of 4,119 women were available for our analysis, with 2,664 NHW and 1,455 NHB.

 The following varying-coefficient (VC) model was used to fit the data:
\begin{equation}
    \label{eq:1}
y_i = \beta_{0}(T,\tau) + \sum_{j=1}^{12} x_{j,i} \beta_{j}(T,\tau) + \sum_{j=2}^{12} x_{1,i} x_{j,i} \eta_{j}(T,\tau)+ \epsilon_i(\tau),
\end{equation}
where $y_i$ represents the BMI, $T$ represents age, and $x_{1,i}$ represents the race variable, while $\epsilon_i(\tau)$'s are independent errors satisfying $P(\epsilon_i(\tau) \le 0 \mid x_i, T_i) = \tau$. Here, $\beta_{j}(T,\tau)$ represents a quantile coefficient function for the $j$-th explanatory variable, and $\eta_{j}(T,\tau)$ represents a quantile coefficient function for the interaction between the $j$-th explanatory variable and the race variable.

To re-scale the age variable, we set $0 \le T \le 1$, where $0$ corresponds to 20 years old and $1$ corresponds to 80 years old or more. We normalized each continuous variable such that its mean and standard deviation were $0$ and $1$, respectively. To  choose $\lambda$, we used BIC as described in Section \ref{sec2vc}.

Figure 6 displays the estimated functional coefficients, where the $x$-axis and $y$-axis represent the age index $T \in (0,1)$ and the BMI quantile level $\tau \in (0.1, 0.9)$, respectively. The two-dimensional graphs describe how the coefficients vary with age and the quantile level.
Overall, most variables, except sodium level, zinc, and race, showed a negative association with BMI. The effect of education on BMI was more pronounced in the younger age group compared to the older group. In contrast, having insurance had a more constant effect across different ages, while a heteroscedastic association was observed over different BMI quantiles. For example, having insurance tended to be associated with attenuated BMI for individuals with medium or lower BMI across all ages. A higher income (i.e., higher PIR) was associated with a lower BMI, and as an individual gets older, the effect seems to be stronger at the upper level of the BMI distribution. NHB women appeared to have a higher BMI than NHW women across all ages.

The disparity in BMI between NHW and NHB women  can be  inferred by observing the interaction between race and risk factors. The positive effect sizes observed in the interaction plots suggest that the impact of risk factors on BMI levels is greater in NHB women than in NHW women. Although most interactions did not show a heteroscedastic effect, the interaction between sodium and race appeared to vary with age.

\subsection{Time-varying and heteroscedastic effects of risk factors on LDL cholesterol}

 \begin{figure}[t] \label{fig_BIC2}
 \centering
\includegraphics[width=0.9\textwidth, height=14cm]{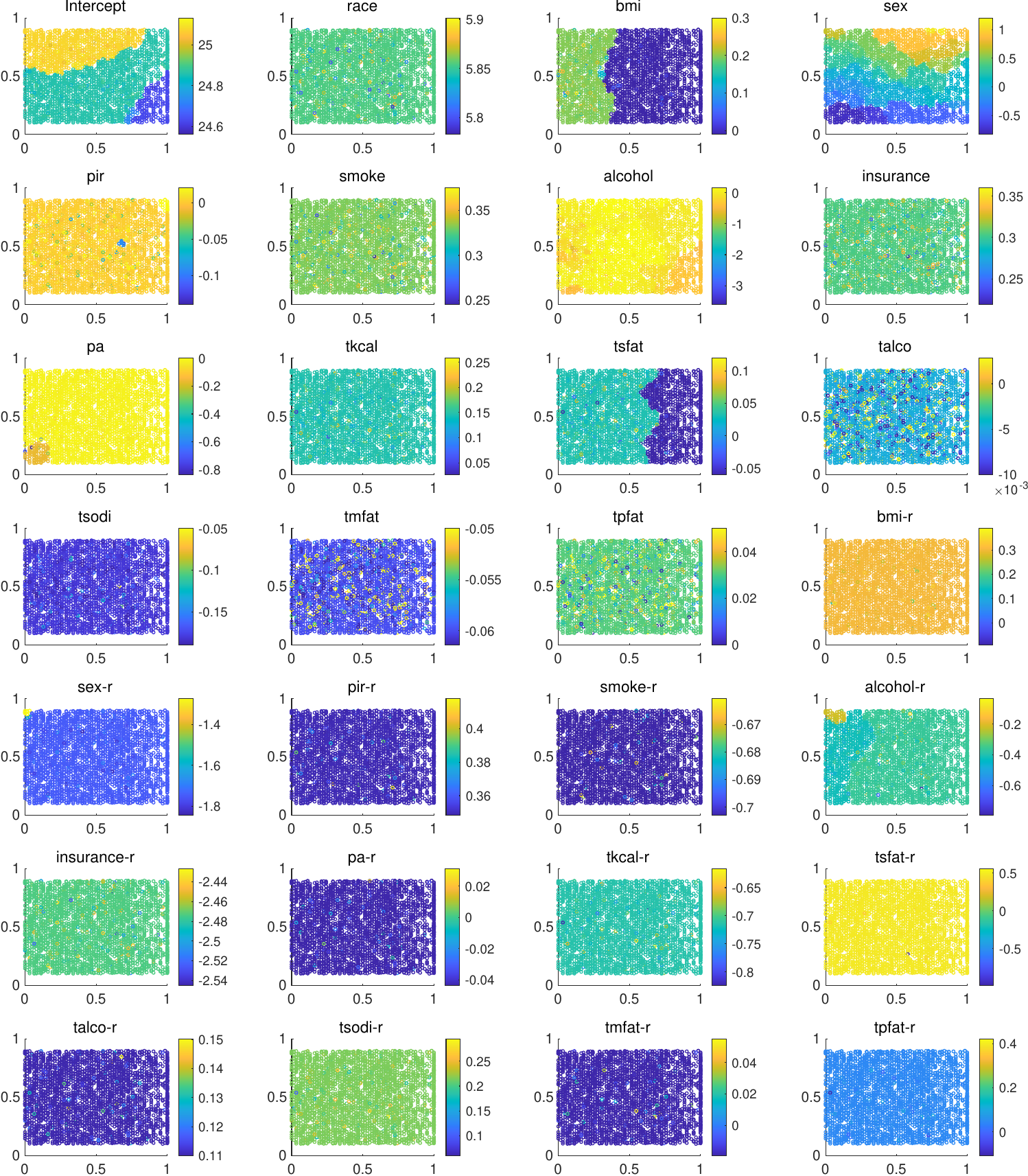}\\
\caption{The estimated quantile coefficient functions using the LDL data. The $x$-axis and $y$-axis represent
the age index $T \in (0,1)$ and the quantile level $\tau \in (0.1, 0.9)$, respectively.
On top of the figures, `$x$-r' represents an interaction between the risk factor `$x$' and the `race.'}
\end{figure}

Low-density lipoprotein (LDL) cholesterol is associated with various health problems such as cardiovascular diseases and stroke in adults. \citep{amarenco2020comparison, zeljkovic2010ldl, valdes2019relative}  In this study, we use the RQF framework to model the association between risk factors and LDL cholesterol levels (measured in mg/dL), where LDL is the health outcome of interest. Previous research has shown that LDL cholesterol levels tend to be associated with age. For instance, Ferrara et al. \cite{ferrara1997total} observed that LDL tends to increase in younger and middle-aged adults and decrease with people who are older than 65 years.

Covariates included in our model were race (NHW and NHB), PIR (0-5), sex (1 if female and 0 if male), age (18-80+), alcohol (average number of alcoholic drinks/day during the past 12 months), smoking status (1 if smoked at least 100 cigarettes in life, 0 otherwise), physical activity (continuous), BMI (continuous), energy intake (tkcal), total saturated fatty acids (tsfat, gm), alcohol intake (talco, gm), sodium intake (tsodi, mg), total monounsaturated fatty acids (tmfat, gm), and total polyunsaturated fatty acids (tpfat, gm). In addition, we included the interaction term between race and each of these covariates. The LDL data were also obtained from the 2011-2018 NHANES, and a total of 3,386 subjects were included in the analysis after excluding those with missing variables.

Similar to the BMI study, we considered the following VC model for LDL cholesterol:
\begin{equation}
    \label{eq:2}
y_i = \beta_{0}(T,\tau) + \sum_{j=1}^{14} x_{j,i} \beta_{j}(T,\tau) + \sum_{j=2}^{14} x_{1,i} x_{j,i} \eta_{j}(T,\tau)+ \epsilon_i(\tau),
\end{equation}
where $y_i$ is an LDL cholesterol level, $T$ is  age,  $x_{1,i}$ represents race,
and $\epsilon_i(\tau)$'s are independent errors satisfying 
$P(\epsilon_i(\tau) \le 0 \mid x_i, T_i) = \tau$.
Here, $\beta_{j}(T,\tau)$ represents a quantile varying-coefficient function for the $j$-th explanatory variable, whereas $\eta_{j}(T,\tau)$ represents a  quantile coefficient function for the interaction between the $j$-th covariate and a race variable.
We re-scaled the age variable to be $0 \le T \le 1$, where $0$ corresponds to 18 years old and $1$ to 80$+$ years old.

The results are presented in Figure 7. Most covariates, including race, smoking, insurance, talco, and tsodi, showed homogeneous associations with LDL levels. NHB had higher LDL cholesterol levels than NHW across all quantiles and ages. On the other hand, heteroscedastic effects were observed in BMI, sex, PA, and tsfat. For BMI, its impact on LDL varied with age, having a greater impact on younger individuals than older ones. The effect size of sex on LDL increased with higher LDL levels. Figure 7 also suggests that interactions between race and other risk factors, such as BMI, sex, PIR, smoking, alcohol, insurance, physical activity, energy intake (tkcal), alcohol, tsfat, and tsodi, are prevalent. For example, higher BMI was associated with larger increases in LDL levels for NHB compared to NHW. Increased physical activity reduced LDL levels more for NHB than NHW, suggesting that NHB may benefit more from increased physical activity in lowering LDL cholesterol levels than NHW.

Our RQF model allows us to understand how risk factors influence health outcomes by incorporating their varying effects over age and quantiles. The findings on the impact of risk factors and their interaction with race on health outcomes of interest, namely BMI and LDL cholesterol levels, provide important insights for the field of public health and health disparity research.

We note that NHANES uses a complex, multistage probability sampling design, and therefore, to avoid biased estimation, the sampling weight should be used. However, our model was not specifically designed for population-based survey data, and thus, we did not incorporate the sampling weights in our analysis. As a result, caution is needed in interpreting our results.

\section{Conclusion}
We have shown that the proposed regional quantile regression approach in varying-coefficient models, known as RQF, can provide valuable insights into health outcome studies. The RQF method is demonstrated to yield a consistent and locally adaptive estimate in various settings. It provides consistent estimation when the number of nonconstant coefficient functions $\beta_j^o$ is bounded by the rate of $n/\log (pn)$
{ and the number of different edge-connected coefficient values in the minimum spanning trees is bounded by the rate of $n\log (pn)$, i.e.,  $s_1 \lesssim n\log (pn)$.} Additionally, our approach can capture underlying smoothly varying patterns as well as cluster structures in varying-coefficients of health risks. This is because RQF employs the fused Lasso penalty function to encourage similarity in coefficients between adjacent locations when both quantile levels and the index variable are used as distance metrics in the KNN graph.

By adapting the ADMM framework to the proposed RQF approach as shown in \eqref{eqmain}, RQF is easy to implement and each step of the algorithm can be parallelized, leading to a scalable computation model. Our simulation results demonstrate that RQF can detect underlying cluster structures and smoothly varying patterns better than standard nonparametric methods that use B-splines. Analysis of the BMI and cholesterol studies reveals heteroscedastic associations between the covariates and different quantile regions of the health outcome distribution, which cannot be captured by standard VC regression approaches. Based on our theoretical investigation, we conjecture that the RQF estimation obtained in our data analyses is consistent, and the detected cluster structures are close to the truth under some regularity conditions.

There is room for exploring further variants of the RQF approach. For instance, if variable selection is a primary concern, one could consider adding an extra Lasso penalty for dimension reduction in RQF. However, our empirical studies suggest that RQF performs variable selection naturally to some extent, as it detects both underlying cluster structures and dynamic patterns in each coefficient. Another potential future direction for RQF is allowing cluster structures across different varying-coefficients simultaneously. For example, if prior knowledge suggests that some varying-coefficients share similar varying patterns or cluster structures, this information could be utilized in the estimation procedure by adding extra penalty functions that penalize the differences of those varying-coefficients.  Furthermore, considering that KNN is a nonparametric estimation method, future studies could benefit from exploring the combination of other nonparametric methods, like B-splines, with a fused lasso penalty term. We plan to report on this work elsewhere.

\medskip

{Data Availability Statement:   The data utilized in this study were obtained from the National Health and Nutrition Examination Survey (NHANES), which is conducted by the National Center for Health Statistics (NCHS) of the Centers for Disease Control and Prevention (CDC). The NHANES data are publicly available and can be accessed through the CDC website (https://www.cdc.gov/nchs/nhanes/index.htm). The code used to generate the results in this study is available in the following Github site: https://github.com/younghhk/software/tree/master/KNN.}


\vskip 1cm

\appendix
\section{Additional notations}
For $\beta_j$ and $\theta_j$ defined in Section 2.2,
we can write 
\begin{equation}\label{eq1p}
\beta_j = 
\begin{bmatrix}
\beta^j_{G_1} \\
\vdots \\
\beta^j_{G_L}
\end{bmatrix}
=
\begin{bmatrix}
[\tilde{H}_1^{(1)}]^{-1} & & \\
 & \ddots &  \\
& & [\tilde{H}_L^{(1)}]^{-1}
\end{bmatrix}
\theta_j
=
\begin{bmatrix}
[\tilde{H}_1^{(1)}]^{-1} \theta^j_{G_1} \\
\vdots\\
[\tilde{H}_L^{(1)}]^{-1} \theta^j_{G_L} \\
\end{bmatrix}. 
\end{equation}
Define matrices  $\bar{H}$ and $\ddot{H}$ as
\[
\bar{H}=
\begin{bmatrix}
[\tilde{H}_1^{(1)}]^{-1} & & \\
 & \ddots &  \\
& & [\tilde{H}_L^{(1)}]^{-1}
\end{bmatrix}
\in \mathbb{R}^{n \times n}
, \quad
\ddot{H} 
=
\begin{bmatrix}
\tilde{H}_1^{(2)}[\tilde{H}_1^{(1)}]^{-1} & & \\
& \ddots & \\
&&\tilde{H}_L^{(2)} [\tilde{H}_L^{(1)}]^{-1}
\end{bmatrix} 
\in \mathbb{R}^{(|E|+ L-n) \times n}.
\]
{By defining the new design matrix $\tilde{X}$ as
\[
\tilde{X} = [\mbox{diag}(x_1) \bar{H}, \ldots, \mbox{diag}(x_p) \bar{H}]
:=
\begin{bmatrix}
\tilde{x}_1^\top \\
\vdots \\
\tilde{x}_n^\top
\end{bmatrix} \in \mathbb{R}^{n \times (np)},
\]
we write the model expression $\sum_{j=1}^p x_{ij} \beta_{j}(t_i, \tau_i)=\tilde{x}_i^\top \theta$ in terms of the new parameter $\theta$}, where $\mbox{diag}(x_j)$ is an $n \times n$ diagonal matrix consisting of elements in the vector {$(x_{1j}, x_{2j},\ldots, x_{nj})^\top$}. Additionally, {by simple matrix algebra}, $H\beta_j$ can be expressed as
\begin{equation}\label{Hb}
H\beta_j = 
\begin{bmatrix}
H_1 \beta^j_{G_1} \\
\vdots \\
H_L \beta^j_{G_L}
\end{bmatrix} 
= 
\begin{bmatrix}
H_1 [\tilde H_1^{(1)}]^{-1} \theta^j_{G_1}\\
\vdots \\
H_L [\tilde H_L^{(1)}]^{-1} \theta^j_{G_L}
\end{bmatrix} 
= 
\begin{bmatrix}
\theta^j_{G_1, B_1}\\
\tilde{H}_1^{(2)}[\tilde{H}_1^{(1)}]^{-1}   \theta^j_{G_1} \\
\vdots \\
\theta^j_{G_L, B_1}\\
\tilde{H}_L^{(2)}[\tilde{H}_L^{(1)}]^{-1}   \theta^j_{G_L} 
\end{bmatrix}, 
\end{equation}
where $\theta^j_{G_l, {B_1}} := [\theta^j_{G_l}]_{\{2,\ldots, n_l\}} \in \mathbb{R}^{n_l-1}$ is a subvector of $\theta^j_{G_l}$  corresponding to the minimum spanning tree in the graph $G_l$.

\section{Additional results and figures}

 \begin{figure}[!h] \label{fig_mse}
 \centering
\includegraphics[width=0.35\textwidth, height=5cm]{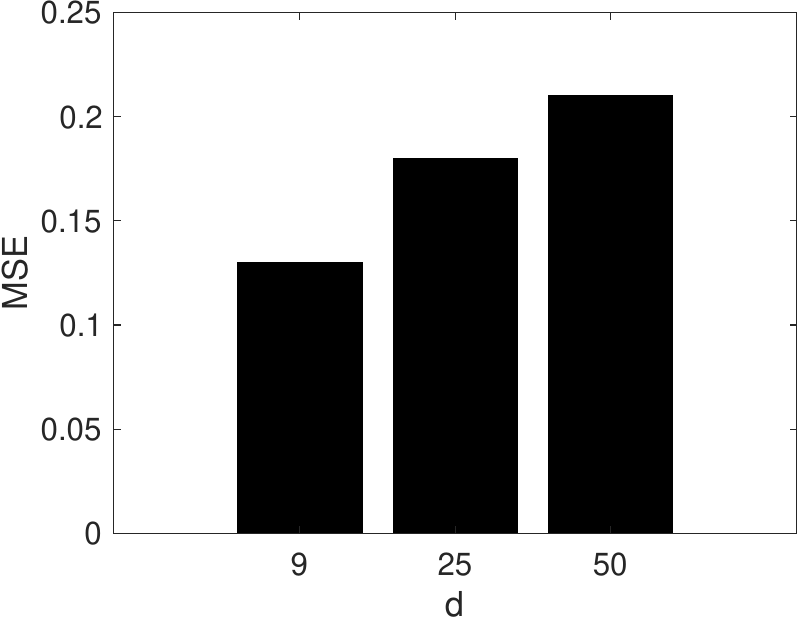}\\
\caption{The mean MSE for RQF over 100 simulations, under various noises.}
\end{figure}  

We  explored the performance of the RQF with a wide ranges of different number of covariates from  $d =9$ to $d=50$. The computed MSE is shown in Figure B1. As expected, MSE tends to deteriorate with the larger $d$. 

 \begin{figure}[!h] \label{fig_msd} 
 \centering
\includegraphics[width=0.7\textwidth, height=7cm]{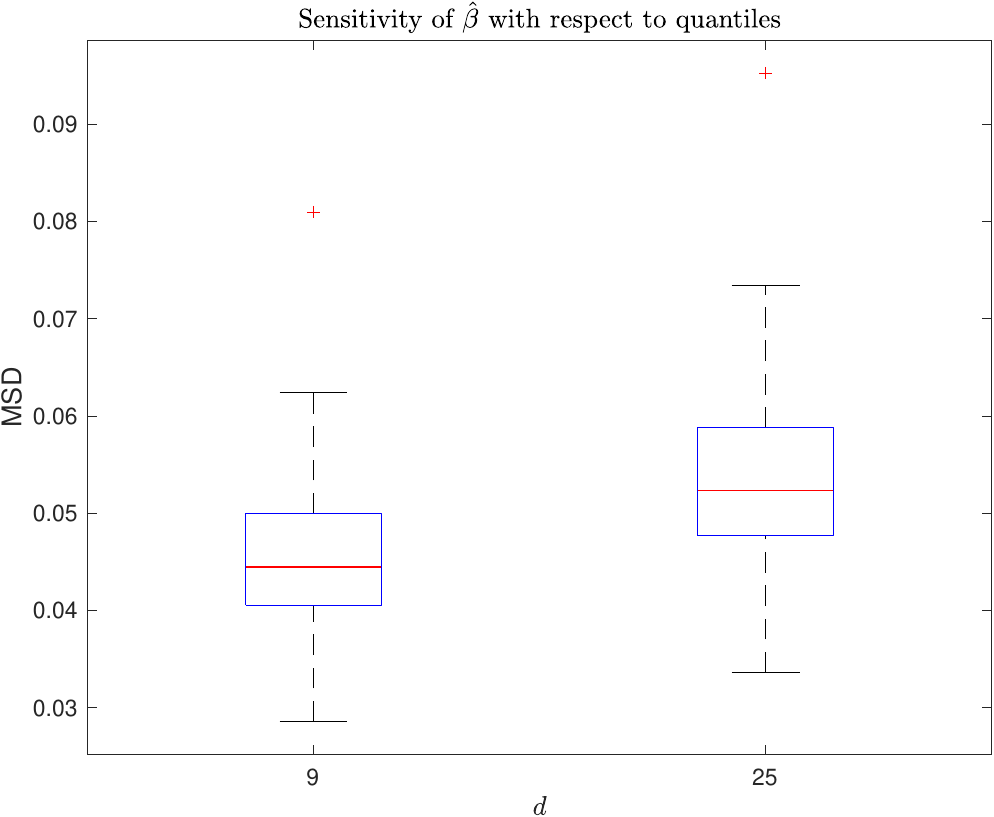}\\
\caption{{Boxplot of MSD values from $500$ different quantile choices when  $d \in \{9,25\}$.}}
\end{figure}

\begin{figure} \label{fig_msd_K} 
 \centering
  \begin{tabular}{@{}c@{}}
\includegraphics[width=0.5\textwidth, height=7cm]{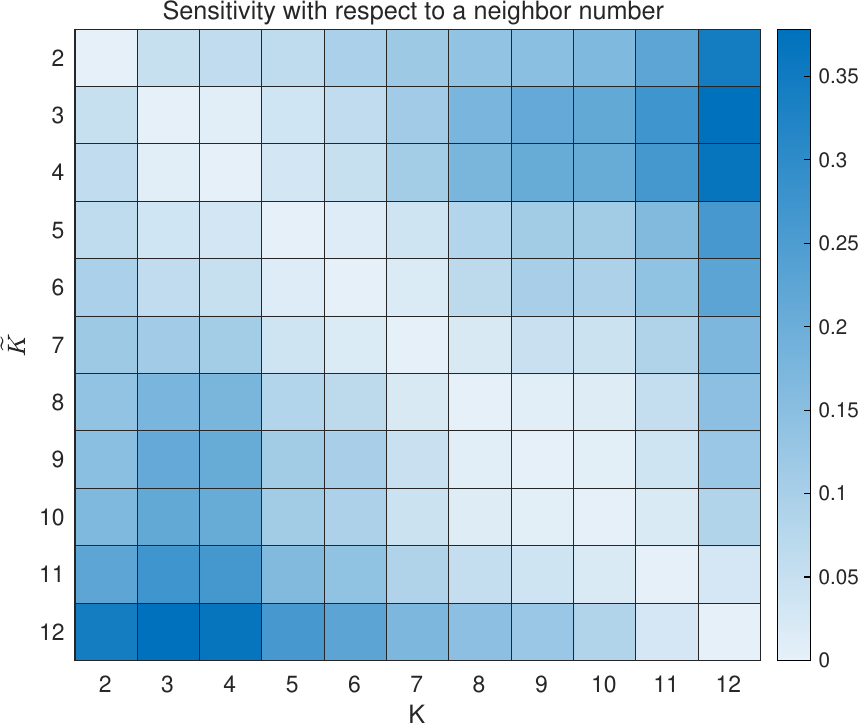}
\includegraphics[width=0.5\textwidth, height=7cm]{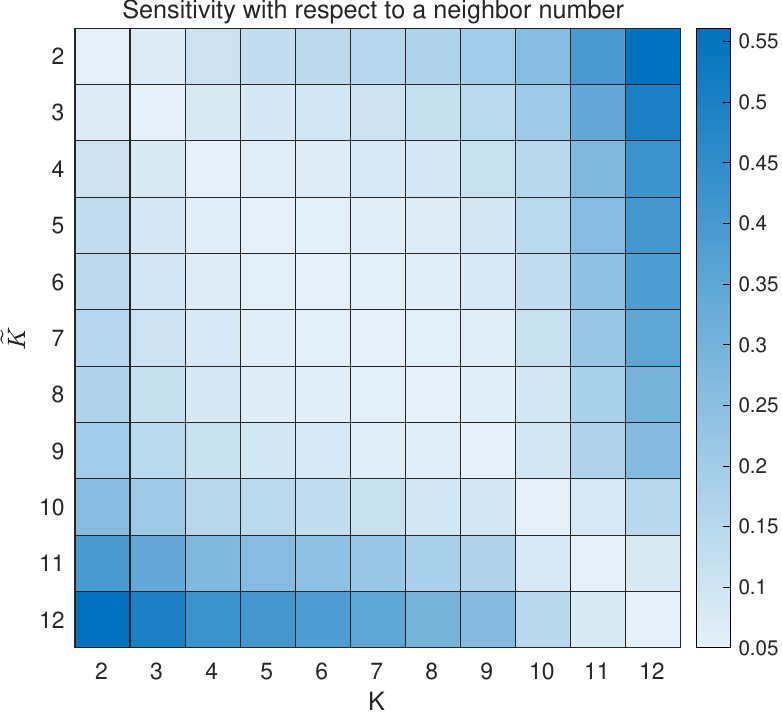}
  \end{tabular}
\caption{
 {  MSD between 
$\hat\beta^{(K)}_j$ and $\hat\beta^{(\tilde K)}_j$ with
 $K, \tilde K \in \{2,\ldots,12\}$ when   $d=9$ (left) and $d=25$ (right).}}
\end{figure}

\begin{figure} \label{figurefixed2}
\centering
\includegraphics[width=0.8\textwidth, height=8cm]{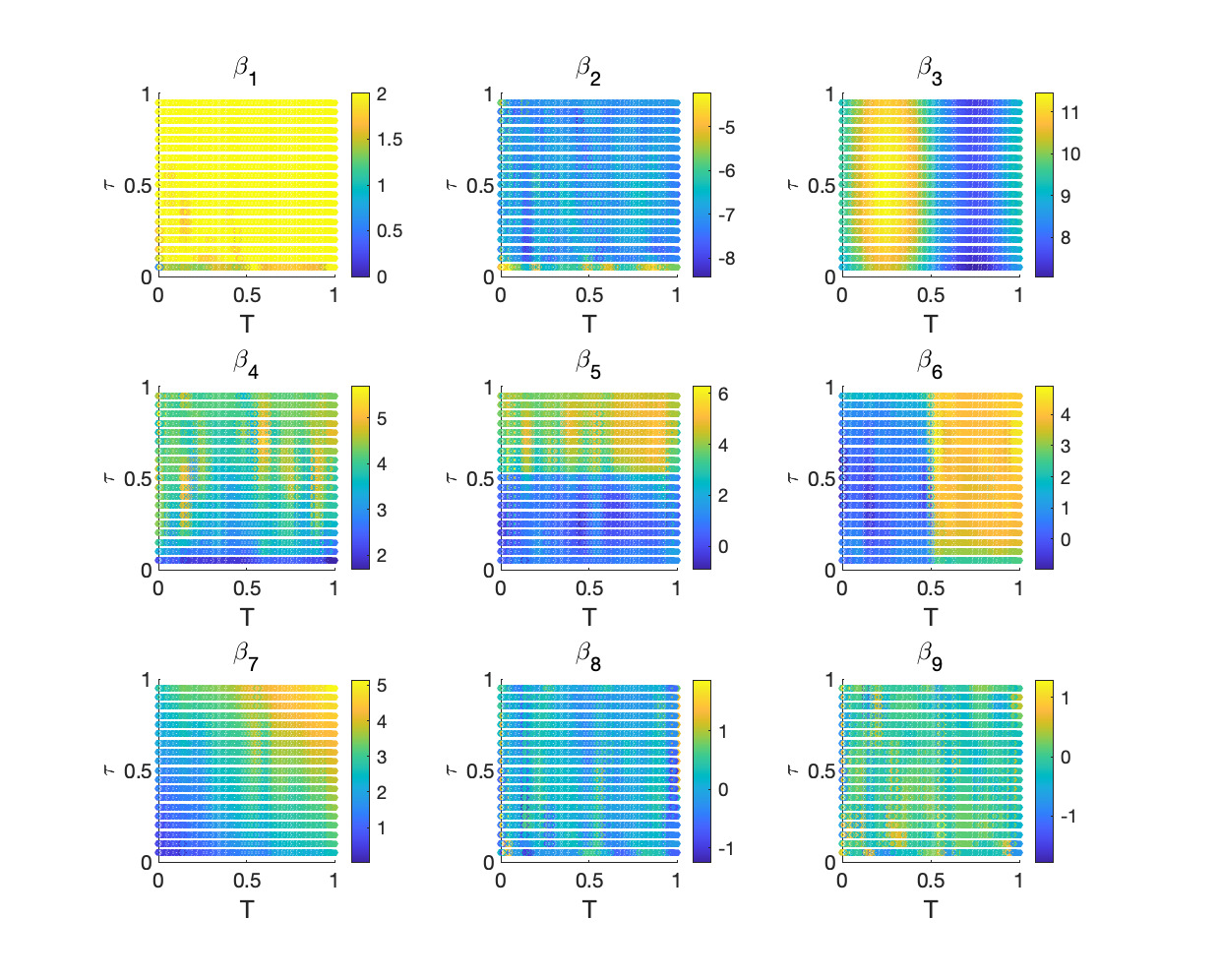}
\caption{Estimated coefficient function $\hat \beta_j(t,\tau)$ for $j=1,\ldots, 9$ when fixed quantile level is used.}
\end{figure}

\newpage
\section{Technical proofs}
In this Appendix, we include detailed proofs of theorems.

\begin{proof}[Proof of Theorem  1]
Let $\delta := \hat\theta- \theta^o = [\delta_1^\top,\ldots, \delta_p^\top]^\top$.  
For $\theta \in \mathbb{R}^{np}$, let $\hat Q(\theta)$ and   $Q(\theta)$ denote 
\[
\hat Q(\theta) := \frac{1}{n}\sum_{i=1}^n \rho_{\tau_i}\left(y_i - \tilde{x}_i^\top \theta \right),\quad 
 Q(\theta) = E[\hat Q(\theta)].
\]
Because $\hat\theta$ is the minimizer of (3), we have
\[
\hat Q(\hat\theta) + \lambda \sum_{j=1}^p \|\hat\theta_{j,B_1}\|_1 + \lambda \sum_{j=1}^p \|\ddot{H}\hat\theta_j\|_1
\le \hat Q(\theta^o) +\lambda \sum_{j=1}^p \|\theta^o_{j,B_1}\|_1 + \lambda \sum_{j=1}^p \|\ddot{H}\theta^o_j\|_1.
\]
Since $\hat Q(\cdot)$ is a convex function, we have
\begin{eqnarray*}
\hat Q(\hat\theta) &\ge& \hat Q(\theta^o)  + \langle \nabla \hat Q(\theta^o),\delta \rangle \\
&\ge& \hat Q(\theta^o) + \left\langle \sum_i \psi_{\tau_i}(y_i - \tilde{x}_i^\top \theta^o) \tilde x_i, \delta \right\rangle.
\end{eqnarray*}
Combining these inequalities with $\lambda \gtrsim \sqrt{\log(pn)/n}$, we obtain
\begin{eqnarray*}
&& \lambda  \sum_{j=1}^p \|\hat\theta_{j,B_1}\|_1 + \lambda \sum_{j=1}^p \|\ddot{H}\hat\theta_j\|_1  \\
&\le&\lambda \sum_{j=1}^p \|\theta^o_{j,B_1}\|_1 +  \lambda \sum_{j=1}^p \|\ddot{H}\theta^o_j\|_1 + \left| \left\langle \sum_i \psi_{\tau_i}(y_i - \tilde{x}_i^\top \theta^o) \tilde x_i, \delta \right\rangle \right| \\
&\le& \lambda \sum_{j=1}^p \|\theta^o_{j,B_1}\|_1 +  \lambda \sum_{j=1}^p \|\ddot{H}\theta^o_j\|_1  + 0.5 \lambda \|\delta\|_1,
\end{eqnarray*}
where the second inequality follows from $\|\sum_i \psi_{\tau_i}(y_i - \tilde{x}_i^\top \theta^o) \tilde x_i\|_{\max} \lesssim \sqrt{\log(pn)/n} \lesssim 0.5 \lambda$ using the same arguments in Theorem 1 of Belloni and Chernozhukov \cite{Belloni2011ell1}.
Hence, we have {
\begin{eqnarray*}
&& 0.5 \|\delta\|_1 
\le  \|\delta\|_1 +  \sum_{j=1}^p \|\theta^o_{j,B_1}\|_1  + \sum_{j=1}^p \|\ddot{H}\theta^o_j\|_1 -  \sum_{j=1}^p \|\hat\theta_{j,B_1}\|_1 -  \sum_{j=1}^p \|\ddot{H}\hat\theta_j\|_1 \\
&\le&  \|\delta_{B_1}\|_1 + \|\delta_{B_1^c}\|_1+ \sum_{j=1}^p \|\theta^o_{j,B_1}\|_1  + \sum_{j=1}^p \|\ddot{H}\theta^o_j\|_1 -  \sum_{j=1}^p \|\hat\theta_{j,B_1}\|_1 -  \sum_{j=1}^p \|\ddot{H}\hat\theta_j\|_1 \\
&=&  \|\hat\delta_{(S_1^U)^c}\|_1 + \|\delta_{S_1^U}\|_1 +  \|\delta_{B_1^c}\|_1+ \sum_{j=1}^p \|\theta^o_{j,S_1^U}\|_1  + \sum_{j=1}^p \|(\ddot{H}\theta^o_j)_{S_{2j}}\|_1 -  \sum_{j=1}^p \|\hat\theta_{j,(S_1^U)^c}\|_1 -\sum_{j=1}^p \|\hat\theta_{j,S_1^U}\|_1 -  \sum_{j=1}^p \|(\ddot{H}\hat\theta_j)_{S_{2j}^c}\|_1 - \sum_{j=1}^p \|(\ddot{H}\hat\theta_j)_{S_{2j}}\|_1 \\
&\le&  \|\delta_{B_1^c}\|_1 +  2\|\delta_{S_1^U}\|_1 +  \sum_{j=1}^p \|(\ddot{H}\theta^o_j)_{S_{2j}}\|_1 - \sum_{j=1}^p \|(\ddot{H}\delta_j)_{S_{2j}^c}\|_1 \\
&\le& 2\|\delta_A\|_1 + \sum_{j=1}^p \|(\ddot{H}\delta_j)_{S_{2j}}\|_1 - \sum_{j=1}^p \|(\ddot{H}\delta_j)_{S_{2j}^c}\|_1,
\end{eqnarray*}}
which implies 
\[
\|\delta_{A^c}\|_1 + \sum_{j=1}^p \|(\ddot{H}\delta_j)_{S_{2j}^c}\|_1 \le 3\left(\|\delta_{A}\|_1 + \sum_{j=1}^p \|(\ddot{H}\delta_j)_{S_{2j}}\|_1 \right),
\]
i.e., $\delta \in C(A, S_2^U)$, where the restricted set $C(A,S_2^U)$ is defined in Assumption 2.

Next, suppose that $\|\delta\| > t := C_1\sqrt{s  \log(pn)}/\sqrt{n}$ for large enough $C_1>0$.
Then, we have
\begin{eqnarray*}
0 &>&  \min_{\delta \in C(A, S_2^U), \|\delta\|_{2,n} \ge t} \hat{Q}(\theta^o + \delta) - \hat{Q}(\theta^o)  + \lambda \sum_{j=1}^p \|\hat\theta_{j,B_1}\|_1 + \lambda \sum_{j=1}^p \|\ddot{H}\hat\theta_j\|_1  - \lambda \sum_{j=1}^p \|\theta^o_{j,B_1}\|_1 - \lambda \sum_{j=1}^p \|\ddot{H}\theta^o_j\|_1.
\end{eqnarray*}
 This inequality implies 
\begin{eqnarray*}
0 &>&  \min_{\delta \in C(A, S_2^U), \|\delta\|_{2,n} = t}  \ \underbrace{\hat{Q}(\theta^o + \delta) - \hat{Q}(\theta^o)}_{I_1}  \\
 &+& \underbrace{\lambda \sum_{j=1}^p \|\hat\theta_{j,B_1}\|_1 + \lambda \sum_{j=1}^p \|\ddot{H}\hat\theta_j\|_1  - \lambda \sum_{j=1}^p \|\theta^o_{j,B_1}\|_1 - \lambda \sum_{j=1}^p \|\ddot{H}\theta^o_j\|_1}_{I_2}.
\end{eqnarray*}
Note that
\begin{eqnarray*}
|I_2| &\le&  \lambda \|\delta_{B_1}\|_1 +  \lambda \sum_{j=1}^p \|\ddot{H} \delta_j\|_1 \le  4 \lambda \left(\|\delta_{A}\|_1 + \sum_{j=1}^p \|(\ddot{H}\delta_j)_{S_{2j}}\|_1 \right)  \\
&\le& 6\lambda\sqrt{s_1}\|\delta\| +  6\lambda  \sqrt{s_2}\|(I_p \otimes \ddot{H})  \delta\| \le  6C_2\lambda\sqrt{s}\|\delta\|
\end{eqnarray*}
for some $C_2>0$.
Further, by Lemmas \ref{lemexpect} and  \ref{lem3}, we have
\[
I_1 \ge   C t^2- 
t \sqrt{s_1   \log(pn)}/\sqrt{n}.
\]
Combining these inequalities, we obtain
\[
0 > C t^2-  t\sqrt{s_1  \log(pn)}/\sqrt{n}- 6C_2 t \sqrt{s\log(pn)/n},
\]
which contradicts to the definition of $t$ with $C_1 > (1+6C_2)/C$. Thus, $\|\delta\| \le t = C_1 \sqrt{s  \log(pn)}/\sqrt{n}$, which implies
$\|\hat\theta - \theta^o\|^2 \lesssim s  \log(pn) /n$, i.e., $\|\hat\theta - \theta^o\|_n ^2 \lesssim s  \log(pn) /n^2$. This completes the proof.
\end{proof}

\vskip 0.5cm
The following lemma presents a lower bound of differences of expected values.

\begin{lemma} \label{lemexpect}
For any $\delta \in  C(A, S_2^U)$ with $\sum_i (\tilde x_i^\top \delta)^2/n \le 16q^2$,  we have for some constant $C>0$,
\[
E[\hat Q(\theta^o+ \delta)] -  E[\hat Q(\theta^o)] \ge C\|\delta\|^2.
\]
\end{lemma}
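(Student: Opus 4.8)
The plan is to lower-bound the population objective gap $Q(\theta^o+\delta)-Q(\theta^o)$, where $Q=E[\hat Q]$, by a quadratic form in $\delta$ and then invoke the two parts of Assumption~\ref{cond2}. Write $v_i:=\tilde x_i^\top\delta$ and $\epsilon_i:=y_i-\tilde x_i^\top\theta^o$, so that $Q(\theta^o+\delta)-Q(\theta^o)=\tfrac1n\sum_{i=1}^n E\big[\rho_{\tau_i}(\epsilon_i-v_i)-\rho_{\tau_i}(\epsilon_i)\big]$. The first step is to apply Knight's identity
\[
\rho_\tau(u-v)-\rho_\tau(u)=v\big(\mathbf 1\{u\le 0\}-\tau\big)+\int_0^v\big(\mathbf 1\{u\le z\}-\mathbf 1\{u\le 0\}\big)\,dz,
\]
with $u=\epsilon_i$, $v=v_i$, and take the conditional expectation given $(x_i,t_i)$. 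Because $\epsilon_i$ has conditional $\tau_i$-th quantile zero, i.e.\ $P(\epsilon_i\le 0\mid x_i,t_i)=\tau_i$, the linear term vanishes in expectation, leaving
\[
E\big[\rho_{\tau_i}(\epsilon_i-v_i)-\rho_{\tau_i}(\epsilon_i)\mid x_i,t_i\big]=\int_0^{v_i}\big(F_i(z)-F_i(0)\big)\,dz,
\]
where $F_i$ denotes the conditional distribution function of $\epsilon_i$ at the quantile level $\tau_i$.

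Next I would minorize this integral by a quadratic. Assumption~\ref{cond1} gives $f_i^{(\tau_i)}(0)\ge\underbar{f}$ and $|(f_i^{(\tau_i)})'|\le\bar{f}'$ on $|x|\le c_1$, so that $F_i(z)-F_i(0)\ge\underbar{f}\,z-\tfrac{\bar{f}'}{2}z^2$ for $z\ge 0$ (with the mirror bound for $z<0$); integrating yields $\int_0^{v_i}(F_i(z)-F_i(0))\,dz\ge\tfrac{\underbar{f}}{2}v_i^2-\tfrac{\bar{f}'}{6}|v_i|^3$. Summing over $i$ and dividing by $n$ produces the quadratic-minus-cubic minorant
\[
Q(\theta^o+\delta)-Q(\theta^o)\ge\frac{\underbar{f}}{2}\,E_n[(\tilde x_i^\top\delta)^2]-\frac{\bar{f}'}{6}\,E_n[|\tilde x_i^\top\delta|^3].
\]
A point needing care here is that the density bound only holds on $|z|\le c_1$, whereas $|v_i|$ is controlled only on average through the hypothesis; since $F_i(z)-F_i(0)\ge0$ for $z\ge0$, the integral can be truncated to this range at no cost, exactly as in Belloni and Chernozhukov. \cite{Belloni2011ell1}

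The remaining and decisive step is to show the cubic correction is dominated by the quadratic term on the restricted set. Writing $W:=E_n[(\tilde x_i^\top\delta)^2]$, the restricted nonlinear impact part of Assumption~\ref{cond2} (the definition of $q$) gives, for every $\delta\in C(A,S_2^U)$, the bound $E_n[|\tilde x_i^\top\delta|^3]\le\tfrac{3\underbar{f}^{3/2}}{8\bar{f}'q}W^{3/2}$, whence
\[
Q(\theta^o+\delta)-Q(\theta^o)\ge\frac{\underbar{f}}{2}W-\frac{\underbar{f}^{3/2}}{16q}W^{3/2}=\frac{\underbar{f}}{2}W\Big(1-\frac{\underbar{f}^{1/2}}{8q}W^{1/2}\Big).
\]
The hypothesis $W\le 16q^2$ forces $W^{1/2}\le 4q$, so the bracketed factor is bounded below by the fixed positive constant $1-\underbar{f}^{1/2}/2$ (positive for any reasonably spread-out error density), and hence $Q(\theta^o+\delta)-Q(\theta^o)\gtrsim W$. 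Finally, the restricted eigenvalue part of Assumption~\ref{cond2} converts this into a Euclidean-norm bound via $W=E_n[(\tilde x_i^\top\delta)^2]\ge\kappa\|\delta\|^2$, yielding $E[\hat Q(\theta^o+\delta)]-E[\hat Q(\theta^o)]\ge C\|\delta\|^2$ with $C$ proportional to $\underbar{f}\,\kappa$. I expect the main obstacle to be ensuring the cubic remainder is genuinely absorbed: this is precisely what the RNI condition together with the $16q^2$ ceiling are engineered to deliver, and it is also where the $|z|\le c_1$ truncation must be reconciled with the merely averaged control of the $v_i$.
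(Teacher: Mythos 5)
Your proof follows essentially the same route as the paper's: Knight's identity, vanishing of the linear term via the conditional quantile restriction, a quadratic-minus-cubic Taylor minorant from Assumption~\ref{cond1}, absorption of the cubic term using the RNI constant $q$ together with the radius $16q^2$, and conversion of $E_n[(\tilde x_i^\top\delta)^2]$ into $\|\delta\|^2$ via the RE constant $\kappa$ --- the paper merely delegates the absorption step to Lemma 4 of Belloni and Chernozhukov, which you carry out explicitly. The one blemish is your final bracket $1-\underbar{f}^{1/2}/2$: its positivity requires $\underbar{f}<4$, which the stated assumptions do not guarantee; with the given calibration of $q$ the radius that makes the bracket at least $1/2$ is $16q^2/\underbar{f}$ rather than $16q^2$, a constant-level looseness that the paper's own proof inherits from the same source and that is harmless after rescaling, but which you should not paper over with the phrase ``reasonably spread-out error density.''
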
	

\begin{proof}[Proof of Lemma \ref{lemexpect}]
Note that for any scalars $w$ and $v$, we have that
\[
\rho_\tau(w-v) -\rho_\tau(w) = -v(\tau-1\{w\le 0\}) + \int_0^v (1\{w \le z\} - 1\{w \le 0\}) dz.
\]
We  have
\begin{eqnarray*}
\hat Q(\theta^o+ \delta)  = \frac{1}{n}  \sum_i \rho_{\tau_{i}}(\epsilon_i(\tau_i)  - \tilde x_i^\top \delta),\quad
\hat Q(\theta^o)  = \frac{1}{n} \sum_i \rho_{\tau_{i}}(\epsilon_i(\tau_i)).
\end{eqnarray*}
Hence, we have for $\bar{z}  \in [0,z]$,
\begin{eqnarray*}
&& E[\hat Q(\theta^o+ \delta)] -  E[\hat Q(\theta^o)] \\
&=& - E\left[\frac{1}{n} \sum_{i=1}^{n} \tilde x_i^\top\delta  \left(\tau_{i} - 1\{\epsilon_i(\tau_i) \le 0 \} \right) \right] + E\left[\frac{1}{n} \sum_{i=1}^{n} \int_0^{\tilde x_i^\top \delta} \left(  1\{\epsilon_i(\tau_i) \le z\} - 1\{\epsilon_i(\tau_i)  \le 0\} \right)dz \right] \\
&=& E\left[\frac{1}{n} \sum_{i=1}^{n} \int_0^{\tilde x_i^\top \delta} \left(  1\{\epsilon_i(\tau_i) \le z\} - 1\{\epsilon_i(\tau_i)  \le 0\} \right)dz \right]\\
&=& \frac{1}{n} \sum_{i=1}^{n} \int_0^{\tilde x_i^\top \delta} \left[z f_i\left(0\right) + \frac{z^2}{2}f' (\bar{z}) 
\right]  dz \\
&\ge&  \frac{ f_i(0)}{2}   \sum_i (\tilde x_i^\top \delta)^2/n - \frac{\bar{f}'}{6} \sum_i (\tilde x_i^\top \delta)^3/n \\
&\ge&  \frac{ f_i(0)}{4}   \sum_i (\tilde x_i^\top \delta)^2/n  +  \frac{\underbar{f}}{4}   \sum_i (\tilde x_i^\top \delta)^2/n - \frac{\bar{f}'}{6} \sum_i (\tilde x_i^\top \delta)^3/n \\
&\ge& \frac{ f_i(0)}{4}   \sum_i (\tilde x_i^\top \delta)^2/n \gtrsim  C\|\delta\|^2
\end{eqnarray*}
for some constant $C>0$, where the first and second inequalities follow from Assumption  1 and the third  inequality follows from the same arguments in the proof of Lemma 4 of \cite{Belloni2011ell1} with Assumption 2 and the condition $\sum_i (\tilde x_i^\top \delta)^2/n \le 16q^2$. 
This completes the proof.
\end{proof}

Let $\|\delta\|_{2,n} :=  \sqrt{\frac{1}{n}  \sum_{i=1}^n \left\{\tilde x_i^\top  \delta \right\}^2}$.
For a random sample $Z_1,\ldots, Z_n$, let $G_n(f) := n^{-1/2} \sum_{i=1}^n (f(Z_i) - E[f(Z_i)])$.
The following Lemma \ref{lem3} presents  bounds on the empirical error over the restricted set $C(A, S_2^U)$.

\begin{lemma}\label{lem3}
Suppose that conditions of Theorem  1  hold. 
Then, with probability at least $1-1/p$,
\[
\sup_{\delta \in C(A, S_2^U):\ \|\delta\|_{2,n} \le t} \left| \hat{Q}(\theta^o + \delta) - Q(\theta^o + \delta) - \left\{\hat{Q}(\theta^o) - Q(\theta^o)  \right\}\right| \le  t\sqrt{s_1   \log(np)/n}
\]
Further, we have with probability at least $1-8/a$, for some constant $C>0$,
\[
\sup_{\delta \in \mathbb{R}^{np}:\ \delta_{(S^U_1)^c}=0,\ \|\delta\|_{2,n} \le t} \left| \hat{Q}(\theta^o + \delta) - Q(\theta^o + \delta) - \left\{\hat{Q}(\theta^o) - Q(\theta^o)  \right\}\right|  \le  12at \sqrt{C s_1 /n}.
\]
\end{lemma}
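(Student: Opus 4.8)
The plan is to read both displays as uniform deviation bounds for a localized empirical process and to control them by symmetrization, contraction, and a maximal or concentration inequality. Writing $f_\delta(Z_i) := \rho_{\tau_i}(\epsilon_i(\tau_i) - \tilde x_i^\top\delta) - \rho_{\tau_i}(\epsilon_i(\tau_i))$, the quantity inside each supremum equals $n^{-1/2}G_n(f_\delta) = \hat Q(\theta^o+\delta) - Q(\theta^o+\delta) - \{\hat Q(\theta^o) - Q(\theta^o)\}$. The starting observation, as in the proof of Lemma \ref{lemexpect}, is that $u\mapsto\rho_{\tau_i}(\epsilon_i(\tau_i) - u) - \rho_{\tau_i}(\epsilon_i(\tau_i))$ is $1$-Lipschitz and vanishes at $u=0$, so $f_\delta$ is a $1$-Lipschitz contraction of the linear map $\delta\mapsto\tilde x_i^\top\delta$. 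First I would bound the expected supremum $E[\sup_\delta n^{-1/2}|G_n(f_\delta)|]$ by symmetrization, introducing Rademacher signs $\varepsilon_i$, and then apply the Ledoux--Talagrand contraction principle to strip off the check function, reducing the problem to the Rademacher complexity of the linear class $E[\sup_\delta|n^{-1/2}\sum_i\varepsilon_i\tilde x_i^\top\delta|]$. Equivalently, one can split $f_\delta$ by Knight's identity into a linear score part and a nonlinear part and treat them separately, which matches the decomposition already used in the paper.

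For the first inequality (the cone case), I would bound the linearized process by H\"older, $|n^{-1/2}\sum_i\varepsilon_i\tilde x_i^\top\delta| \le n^{-1/2}\|\sum_i\varepsilon_i\tilde x_i\|_{\max}\,\|\delta\|_1$, and control the two factors separately. The factor $n^{-1/2}\|\sum_i\varepsilon_i\tilde x_i\|_{\max}$ is a maximum of $np$ centered, bounded sums, so a maximal inequality (the same one used for the score bound in the proof of Theorem \ref{thmmain1}, following Belloni and Chernozhukov \cite{Belloni2011ell1}) gives $\lesssim\sqrt{\log(np)}$ on an event of probability at least $1-1/p$. For the factor $\|\delta\|_1$, I would invoke the membership $\delta\in C(A,S_2^U)$ together with the restricted eigenvalue part of Assumption \ref{cond2}: the cone inequality bounds the off-support $\ell_1$ mass by the active mass, and $\|\delta\|_{2,n}\le t$ with $\kappa>0$ converts the active mass carried by the spanning-tree coordinates into $\lesssim\sqrt{s_1}\,t$. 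Multiplying the two factors yields the claimed rate $t\sqrt{s_1\log(np)/n}$, with a bounded-differences concentration step transferring the bound from its mean to the stated high-probability form.

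For the second inequality I would exploit that the supremum is now taken over the fixed coordinate subspace $\{\delta:\ \delta_{(S_1^U)^c}=0\}$ of dimension $s_1$, so no union over all $np$ coordinates is needed and the $\log(np)$ factor disappears. After the same symmetrization and contraction, the linearized process restricted to this subspace equals $t$ times the $\|\cdot\|_{2,n}$-dual norm of the projection of $n^{-1/2}\sum_i\varepsilon_i\tilde x_i$ onto $S_1^U$; its second moment is a sum of $s_1$ bounded-variance terms and is therefore $\lesssim s_1$. Applying Markov's inequality to this norm produces the polynomial tail $1-8/a$ and the bound $12at\sqrt{Cs_1/n}$, with the constants $8$ and $12$ absorbing the factors contributed by symmetrization, contraction, and Markov; here I deliberately use a crude polynomial tail rather than exponential concentration because only a rough bound is needed downstream.

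The hard part will be justifying that only $s_1$, and not the full $s=s_1+s_2$, enters the first bound. This requires a careful accounting so that the stochastic fluctuation is charged to the $\asymp s_1$ spanning-tree (active) directions indexed by $S_1^U$, while the non-tree difference directions associated with $S_2^U$ are controlled through the penalty and expectation terms (the $I_2$ term in the proof of Theorem \ref{thmmain1}) rather than through the empirical process itself. A secondary technical obstacle is the concentration step for the first part: because an individual $|\tilde x_i^\top\delta|$ can be large even when $\|\delta\|_{2,n}\le t$, obtaining the exponential $1-1/p$ tail needs control of the envelope of the localized Lipschitz class, e.g.\ via truncation or a careful bounded-differences constant, before the maximal inequality can be applied cleanly.
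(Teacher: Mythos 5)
Your overall skeleton matches the paper's: the paper also symmetrizes (via the symmetrization lemma of van der Vaart, with the variance bound $\mathrm{Var}(\cdot)\le\|\delta\|_{2,n}^2\le t^2$), splits the increment by Knight's identity into a linear score part $\tau_i\tilde x_i^\top\delta$ and a nonlinear remainder $W_i(\tau_i,\delta)$, bounds the two resulting symmetrized suprema $B^o(t)$ and $C^o(t)$ separately, and for the second display computes $E[(B^o(t))^2]\lesssim s_1t^2$ on the $s_1$-dimensional subspace and applies Markov to get the $1-8/a$ tail with the constants $8$ and $12$ exactly as you describe. So the second inequality of your plan is essentially the paper's proof.

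The gap is in the first inequality, and it is precisely the one you flag but do not resolve. Your route is H\"older plus a maximal inequality, $|n^{-1/2}\sum_i\varepsilon_i\tilde x_i^\top\delta|\le n^{-1/2}\|\sum_i\varepsilon_i\tilde x_i\|_{\max}\|\delta\|_1$, followed by converting $\|\delta\|_1$ on the cone into $\sqrt{s_1}\,t$. That conversion is not available: membership in $C(A,S_2^U)$ gives $\|\delta\|_1\le 4\bigl(\|\delta_A\|_1+\sum_j\|(\ddot H\delta_j)_{S_{2j}}\|_1\bigr)$, and the second term is controlled only as $\sqrt{s_2}\,\|(I_p\otimes\ddot H)\delta\|$, so the cone plus the RE condition yields $\|\delta\|_1\lesssim\sqrt{s}\,t$ (with $s=s_1+s_2$ and constants depending on $\ddot H$), not $\sqrt{s_1}\,t$ --- this is exactly the bound the paper itself derives for the penalty term $I_2$ in the proof of Theorem 1. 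Your argument therefore proves the first display with $\sqrt{s\log(np)/n}$ in place of $\sqrt{s_1\log(np)/n}$, which is weaker than the stated lemma (though, since $t\asymp\sqrt{s\log(pn)/n}$ in the application, it would still suffice to close the contradiction argument in Theorem 1). The paper avoids this by instead bounding the moment generating functions $E[\exp(\lambda B^o(t))]$ and $E[\exp(\lambda C^o(t))]$ by $2np\exp(4C\lambda^2s_1t^2)$ following Lemma 7.2 of Zheng et al., which is where the $s_1$ (rather than $s$) enters; if you want the lemma as stated, you need that MGF/chaining-style bound rather than the H\"older split. A secondary, smaller point: the paper gets the $1-1/p$ tail directly from the Chernoff bound on the symmetrized process, so the separate bounded-differences transfer step (and the envelope truncation it would require) in your plan is not needed.
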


\begin{proof}
Let $\epsilon(t) := \left| \hat{Q}(\theta^o + \delta) - Q(\theta^o + \delta) - \left\{\hat{Q}(\theta^o) - Q(\theta^o)  \right\}\right| $.
For $t>0$, let {
\[
A(t) := \sqrt{n}\epsilon(t) = \sup_{\delta \in C(A, S_2^U):\ \|\delta\|_{2,n} \le t} \left|G_n\left[ \left\{ \rho_{\tau_{i}}(y_i - \tilde x_i^\top (\theta^o + \delta)) - \rho_{\tau_{i}}(y_i - \tilde x_i^\top \theta^o)\right\}   \right] \right|.
\]}
We obtain {
\begin{eqnarray*}
\mbox{Var}\left(G_n\left[\left\{ \rho_{\tau_{i}}(y_i - \tilde x_i^\top(\theta^o + \delta) - \rho_{\tau_{i}}(y_i - \tilde x_i^\top \theta^o)\right\}   \right]\right)
\le  \frac{1}{n}  \sum_{i=1}^n \left\{\tilde x_i^\top  \delta \right\}^2 = \|\delta\|_{2,n}^2  \le t^2.
\end{eqnarray*}}
Application of the symmetrization lemma in \cite{Vaart19962} yields
$
P(A(t)  \ge M) \le {2P(A^o(t) \ge M/4)}/ \left\{1- t^2 / M^2\right\},
$
where $A^o(t)$  is the symmetrized version of $A(t)$, i.e., {
\[
A^o(t) = \sup_{\delta \in C(A, S_2^U):\ \|\delta\|_{2,n} \le t} \left|G_n\left[ V_i  \left\{ \rho_{\tau_{i}}(y_i -\tilde x_i^\top(\theta^o+ \delta) - \rho_{\tau_{i}}(y_i - \tilde x_i^\top \theta^o)\right\}   \right] \right|,
\]}
where  $V_1,\ldots, V_n$ are i.i.d. Rademacher random variables.
We can write
\[
 \rho_{\tau_{i}}(y_i - \tilde x_i^\top(\theta^o + \delta)) - \rho_{\tau_{i}}(y_i -\tilde x_i^\top \theta^o) =  \tau_{i}  \tilde x_i^\top \delta + W_i(\tau_{i}, \delta),
\]
where 
$
W_i(\tau_{i}, \delta) = 
[y_i -\tilde x_i^\top(\theta^o+ \delta)]_- -  [y_i - \tilde x_i^\top\theta^o]_-,
$
where  $u_- = 1\{u < 0\} |u|$.
Then, we have $A^o(t) \le B^o(t) + C^o(t)$, where {
\begin{eqnarray*}
B^o(t) = \sup_{\delta \in C(A, S_2^U):\  \|\delta\|_{2,n} \le t} \left|G_n \left[V_i  \tilde x_i^\top\delta\right]  \right|,\quad
C^o(t) = \sup_{\delta \in C(A, S_2^U):\  \|\delta\|_{2,n} \le t} \left|G_n \left[V_i W_i(\tau_{i}, \delta)\right]  \right|.
\end{eqnarray*}}
Using the same idea in the proof of Lemma 7.2 of  \cite{Zheng2015globally}, we have
\begin{eqnarray*}
E[\exp(\lambda B^o(t))]  &=&  E \left[\exp\left(\lambda \sup_{\delta \in C(A, S_2^U):\  \|\delta\|_{2,n} \le t} \left|G_n \left[V_i  \tilde x_i^\top \delta\right]  \right|\right)\right] \\
&\le&  2np \exp(4C\lambda^2  s_1  t^2)
\end{eqnarray*}
and 
\begin{eqnarray*}
E[\exp(\lambda C^o(t))]  &=&  E \left[\exp\left(\lambda \sup_{\delta \in C(A, S_2^U):\  \|\delta\|_{2,n} \le t} \left|G_n \left[V_i W_i(\tau_{i}, \delta)\right]  \right|\right)\right] \\
&\le&  2np \exp(4C\lambda^2  s_1  t^2)
\end{eqnarray*}
for some constant $C>0$.
 Thus, we have with probability at least $1-1/p$, we have
$|B^o(t)|, |C^o(t)| \le C \sqrt{s_1  \log(np)}t$.
Hence, we have 
$
\epsilon(t) \le  Ct\sqrt{s_1  \log(np)}/\sqrt{n}.
$
This gives the first inequality.

Next, we consider the second case where  $\delta_{(S^U_1)^c} = 0$. 
We have {
\begin{eqnarray*}
E[(B^o(t))^2] &=& E\left[ \sup_{\delta_{(S^U_1)^c}=0,  \|\delta\|_{2,n} \le t} \left|\sum_{i=1}^n   V_i \tilde x_i^\top \delta / \sqrt{n} \right|^2 \right] = E\left[ \sup_{\delta_{(S^U_1)^c}=0,\  \|\delta\|_{2,n} \le t} \left|  \delta_{S^U_1}^\top \sum_{i=1}^nV_i \tilde x_{i,S^U_1} \sqrt{n} \right|^2 \right] \\
&\le& \frac{1}{n}
E\left[ \sup_{\delta_{(S^U_1)^c}=0,  \|\delta\|_{2,n} \le t} C \|\delta\|_{2,n}^2  \left\|\sum_{i=1}^n V_i \tilde x_{i,S^U_1} \right\|^2  \right]  \le \frac{t^2}{n}
E\left[ C \left\|\sum_{i=1}^n V_i \tilde x_{i,S^U_1} \right\|^2  \right] \le  C s_1  t^2,
\end{eqnarray*}}
for some $C>0$,
 By Markov inequality, 
$
P\left[B^o(t) > a \sqrt{C s_1 }t \right] \le 1/a.
$
Using the same idea in the proof of Lemma 7.3 of  \cite{Zheng2015globally}, 
$
P\left[C^o(t) > 2a \sqrt{C s_1 }t \right] \le 1/a.
$
Combining these inequalities, we obtain 
$
P\left[A(t) > 12a \sqrt{C  s_1 }t \right] \le 8/a.
$
\ignore{
\begin{eqnarray*}
E[\exp(\lambda B^o(t))] \le 2pK_n \exp(4\lambda^2  s K_n  t^2).
\end{eqnarray*}
Thus, we have with probability at least $1-1/p$,
$|B^o(t)| \le \sqrt{s K_n \log(pK_n)}t$.
Hence, we have 
\[
\epsilon(t) \le  t\sqrt{s K_n \log(pK_n)}/\sqrt{n}.
\]
}
This shows the second inequality. This completes the proof.
\end{proof}

\begin{proof}[\textbf{Proof of Theorem 2}]
For $\theta \in \mathbb{R}^{s_1}$, let
\[
\hat Q_{S}(\theta) := \frac{1}{n}\sum_{i=1}^n \rho_{\tau_i}\left(y_i - \tilde{x}_{i,S^U_1}^\top \theta \right), \quad Q_S(\theta) = E[\hat Q_S(\theta)]. 
\]
Suppose that $\hat\theta^o_{S}$ is the oracle estimator using the underlying support sets $S^U_1$ and $S^U_2$,  i.e., 
\[
\hat \theta^o_{S} := \argmin_{\theta \in \mathbb{R}^{s_1}, \ \ddot{H}_{S_{2j}^c, S_{1j}} \theta_j = 0} \hat Q_{S}(\theta) + \lambda  \|\theta\|_1 + \lambda \sum_{j=1}^p \|\ddot{H}_{S_{2j}, S_{1j}} \theta_j\|_1,
\]
where $\theta_j \in \mathbb{R}^{s_{1j}}$, $\ddot{H}_{S_{2j}, S_{1j}}$ is the submatrix of $\ddot{H} \in \mathbb{R}^{s_{2j} \times s_{1j}}$ corresponding to the  rows  in $S_{2j}$ and columns in $S_{1j}$, and $\ddot{H}_{S_{2j}^c, S_{1j}}$ is the submatrix of $\ddot{H}$ corresponding to the  rows  in $S_{2j}^c$ and columns in $S_{1j}$.
Define the block diagonal matrices $\dddot{H}_1$ and $\dddot{H}_2$  as follows:
\[
\dddot{H}_1 = 
\mbox{Block}(\ddot{H}_{S_{21}, S_{11}}, \ldots, \ddot{H}_{S_{2p}, S_{1p}}) \in \mathbb{R}^{s_2 \times s_1}, \quad
\dddot{H}_2 = 
\mbox{Block}(\ddot{H}_{S_{21}^c, S_{11}}, \ldots, \ddot{H}_{S_{2p}^c, S_{1p}}) \in \mathbb{R}^{(p|E| + pL - pn - s_2) \times s_1}
\]
Then,  the oracle estimator can be simply rewritten as 
\[
\hat \theta^o_{S} := \argmin_{\theta \in \mathbb{R}^{s_1}, \dddot{H}_2 \theta = 0} \hat Q_{S}(\theta) + \lambda  \|\theta\|_1 + \lambda \|\dddot{H}_1 \theta\|_1.
\]
Because  $\theta$ is in the null space of $\dddot{H}_2$, i.e., $\theta \in \mbox{null}(\dddot{H}_2) \subseteq \mathbb{R}^{s_1}$, we can write $\theta = T \tilde \theta$ for some matrix $T \in \mathbb{R}^{s_1 \times r}$ satisfying $T^\top T = I_r$ and $ \tilde \theta \in \mathbb{R}^r$ , where $r \le s_1$ is the dimension of the subspace $\mbox{null}(\dddot{H}_2)$.
Hence,  the oracle estimator can be rewritten as  $\hat \theta^o_{S}  = T \tilde \theta^o_S$, where
\[
\tilde \theta^o_S := \argmin_{\theta \in \mathbb{R}^{r}}  \hat Q_S(T\theta) + \lambda  \|T\theta\|_1 + \lambda \|\dddot{H}_1 T\theta\|_1,
\]
Let $ \theta^o_{S^U_1} = T \ddot \theta^o_S$.
Let $\ddot \delta_{S} := \tilde\theta^o_S-  \ddot \theta^o_S$.  
By Lemma \ref{lem3},  we can write
\[
\hat{Q}_S(T\ddot \theta^o_S+ T\ddot\delta_{S}) - \hat{Q}_S(T \ddot \theta^o_S)  =  Q_S(T \ddot \theta^o_S +T\ddot \delta_S) -  Q_S(T \ddot \theta^o_S) + O_p\left(\|T\ddot\delta_{S}\|  \sqrt{s_1/n} \right).
\]
Thus, $\ddot\delta_{S}$  is the solution of the optimization
\[ 
\min_{\delta_{S} \in \mathbb{R}^{r}}    Q_S(T \ddot \theta^o_S + T\delta_S) + O_p\left(\|T\delta_{S}\| \sqrt{s_1 /n} \right)
 + \lambda   \|T \ddot \theta^o_S + T\delta_{S} \|_1 +  \lambda \|\dddot{H}_1 T \ddot \theta^o_S + \dddot{H}_1T\delta_{S}\|_1.
\]
Then by the KKT condition, we have
\begin{eqnarray*}
&& T^\top \nabla  Q_S( T\ddot \theta^o_S +T\ddot \delta_S) + T^\top v + \lambda T^\top u + \lambda T^\top \dddot{H}_1^\top t = 0, 
\end{eqnarray*}
where $v$ is some $s_1$-dimensional vector with $\|v\| = O_p(\sqrt{s_1 /n})$, $u =\mbox{sign}(T \ddot \theta^o_S + T\ddot\delta_{S}) \in \mathbb{R}^{s_1}$,
 and  $t=  \mbox{sign}(\dddot{H}_1 T \ddot \theta^o_S + \dddot{H}_1T \ddot\delta_{S}) \in \mathbb{R}^{s_2}$, where $\mbox{sign}$ function is  applied elementwise to the components,
that is,
\begin{eqnarray} \label{fhaeil}
&& T^\top \nabla  Q_S( \hat\theta^o_S) + T^\top v + \lambda T^\top u + \lambda T^\top \dddot{H}_1^\top t = 0.  
\end{eqnarray}
Let $M= \int_0^1 \nabla^2  Q_S( \ddot \theta^o_S+ t(\tilde \theta^o_S- \ddot \theta^o_S)) dt \in \mathbb{R}^{s_1 \times s_1}$. 
Because $\nabla  Q_S(T \ddot \theta^o_S + T\ddot\delta_S)  = \nabla  Q_S(T \ddot \theta^o_S) + M T\ddot \delta_{S} = M T\ddot \delta_S$, 
we have
\[
\ddot\delta_{S} = - (T^\top M T)^{-1} \left(T^\top v+  \lambda T^\top u + \lambda T^\top \dddot{H}_1^\top t  \right).
\]
Hence, we have
\begin{equation}\label{abhildse}
\hat \theta^o_{S}  - \theta^o_{S^U_1} =  - T (T^\top M T)^{-1} \left(T^\top v +  \lambda T^\top u + \lambda T^\top \dddot{H}_1^\top t  \right).
\end{equation}

Now, we consider the following estimator
\begin{equation}\label{segafi}
\hat \theta^o_{S^U_1} := \argmin_{\theta \in \mathbb{R}^{s_1}} \hat Q_{S}(\theta) + \lambda  \|\theta\|_1 + \lambda \|\dddot{H}_1 \theta\|_1 + \lambda \|\dddot{H}_2 \theta\|_1.
\end{equation}
For any $\theta  \in \mathbb{R}^{s_1}$, we can write it as $\theta = T \theta_1 + T^\perp \theta_2$ for some $\theta_1 \in \mathbb{R}^r$ and $\theta_2 \in \mathbb{R}^{s_1-r}$, where $T^\perp \in \mathbb{R}^{s_1 \times (s_1-r)}$ is an orthogonal complement of $T$.
Hence, we can write $\hat \theta^o_{S^U_1} = T \hat\theta_1 + T^\perp \hat\theta_2$, where
\[
(\hat \theta_1, \hat\theta_2) := \argmin_{\theta_1 \in \mathbb{R}^{r}, \theta_2 \in \mathbb{R}^{s_1-r}} \hat Q_{S}(T \theta_1 + T^\perp \theta_2) + \lambda  \|T \theta_1 + T^\perp \theta_2\|_1 + \lambda \|\dddot{H}_1 T \theta_1 + \dddot{H}_1 T^\perp \theta_2\|_1 + \lambda \|  \dddot{H}_2 T^\perp \theta_1 +\dddot{H}_2 T^\perp \theta_2\|_1.
\]
Next, we will prove that $\hat\theta_2=0$, which implies $\hat\theta^o_{S^U_1}= \hat \theta^o_{S} $. By KKT condition, \eqref{fhaeil}, and $\dddot{H}_2 T^\perp \hat\theta^o_{S} =0$, $\hat\theta_2=0$ is equivalent to satisfying 
\begin{eqnarray} \label{agjildf}
&& (T^\perp)^\top \nabla  Q_S( \hat \theta^o_{S}) + (T^\perp)^\top v+ \lambda (T^\perp)^\top u + \lambda (T^\perp)^\top \dddot{H}_1^\top t  +  \lambda (T^\perp)^\top \dddot{H}_2^\top \bar{t} = 0, 
\end{eqnarray}
 for some $\bar{t} \in \mathbb{R}^{p|E| + pL - pn - s_2}$ with $\|\bar t\|_{\max} \le 1$, and   $t=  \mbox{sign}(\dddot{H}_1 \hat \theta^o_{S})$ and $u =\mbox{sign}(\hat \theta^o_{S})$.
 By $\nabla  Q_S(\hat \theta^o_{S})  = \nabla  Q_S(\theta^o_{S^U_1}) + M (\hat \theta^o_{S}-\theta^o_{S^U_1}) = M (\hat \theta^o_{S}-\theta^o_{S^U_1})$ and \eqref{abhildse},  \eqref{agjildf} is equivalent to satisfying \ignore{
\[
- (T^\perp)^\top  M 
T (T^\top M T)^{-1} \left(T^\top v +  \lambda T^\top u + \lambda T^\top \dddot{H}_1^\top t  \right) +  (T^\perp)^\top v
 + \lambda (T^\perp)^\top u + \lambda (T^\perp)^\top \dddot{H}_1^\top t +\lambda (T^\perp)^\top \dddot{H}_2^\top \bar{t} = 0,
\]}
\begin{equation}\label{eqfneial}
-  M 
T (T^\top M T)^{-1} \left(T^\top v +  \lambda T^\top u + \lambda T^\top \dddot{H}_1^\top t  \right) +   v
 + \lambda  u + \lambda \dddot{H}_1^\top t +\lambda T^\perp (T^\perp)^\top \dddot{H}_2^\top \bar{t} = 0,
\end{equation}
\ignore{ 
that is,
\begin{equation}\label{eqfneial}
\left[- (T^\perp)^\top  M T (T^\top M T)^{-1}  T^\top + (T^\perp)^\top\right] v =\lambda \left[ (T^\perp)^\top  M T (T^\top M T)^{-1} (T^\top u +  T^\top \dddot{H}_1^\top t  ) -   (T^\perp)^\top u - (T^\perp)^\top \dddot{H}_1^\top t - (T^\perp)^\top \dddot{H}_2^\top \bar t\right].
\end{equation}}
Let $\ddot{T} = M T (T^\top M T)^{-1} T^\top$. By $\lambda \asymp \sqrt{\log(pn)/n}$, $s_1 = O(n \log (pn))$, and Assumption 4, we have
 $\|v+ \lambda  u + \lambda \dddot{H}_1^\top t\|_{\max} \lesssim \lambda$ and 
 \[
 \|M 
T (T^\top M T)^{-1} \left(T^\top v +  \lambda T^\top u + \lambda T^\top \dddot{H}_1^\top t  \right)\|_{\max} \le \vertiii{\ddot{T}}_{\infty} \|v+ \lambda  u + \lambda \dddot{H}_1^\top t\|_{\max} \lesssim \lambda.
\]
Because  
\[
\|\lambda T^\perp (T^\perp)^\top \dddot{H}_2^\top \bar{t}\|_{\max} \le \lambda \vertiii{T^\perp (T^\perp)^\top \dddot{H}_2^\top}_{\infty}  \| \bar{t}\|_{\max} \le  \lambda \vertiii{\dddot{H}_2}_{1}  \| \bar{t}\|_{\max} \le \lambda  \| \bar{t}\|_{\max},
\]
there exists some $\bar t$ with  $\|\bar t\|_{\max} \le 1$ satisfying  \eqref{eqfneial}.  Thus,  $\hat \theta^o_{S} = \hat\theta^o_{S^U_1}$.

{Note that by Lemma \ref{lem3}, $\delta := \hat\theta - \theta^o$  is  the solution of the optimization
\begin{eqnarray} \label{eqmadf}
\min_{\delta \in \mathbb{R}^{np}}   Q(\theta^o+\delta) + O_p\left(\|\delta\| \sqrt{s_1 \log(np)/n} \right)
 + \lambda   \|\theta^o +\delta \|_1 + \lambda  \|\dddot{H}(\theta^o + \delta)\|_1,
\end{eqnarray}}
where $\dddot{H} =  I_p \otimes \ddot{H}$.
Next, we construct $\hat \theta^o \in \mathbb{R}^{np}$ such that $[\hat \theta^o]_{S^U_1} = \hat\theta^o_S$ and $[\hat \theta^o]_{(S^U_1)^c} =0$. To show that $\hat\theta^o - \theta^o$ is a minimizer of \eqref{eqmadf},
it is enough to show
\begin{eqnarray} \label{eqmaind}
&& \nabla Q(\hat \theta^o) + O_p\left(\sqrt{s_1  \log(np)/n} \right) + \lambda u+ \lambda \dddot{H}^\top t= 0,
\end{eqnarray}
where  $t = \mbox{sign}(\dddot{H}\hat \theta^o)$ and $u$ satisfies  $u_{S^U_1}=\mbox{sign}(\hat \theta^o_S)$, and $\|u_{(S^U_1)^c}\|_{\max} \le 1$.  

Let $\bar M= \int_0^1 \nabla^2  Q(\theta^o+ t(\hat \theta^o-  \theta^o)) dt \in \mathbb{R}^{np \times np}$.
Let $\delta := \hat\theta^o - \theta^o$.
Because $\nabla Q(\theta^o_{S^U_1}+\delta_{S^U_1})  = \nabla Q(\theta^o_{S^U_1}) + \bar M_{S^U_1,S^U_1} \delta_{S^U_1} =\bar M_{S^U_1,S^U_1}\delta_{S^U_1}$, we have by \eqref{segafi} and the KKT condition, 
\[
\delta_{S^U_1} = - \bar M_{S^U_1,S^U_1}^{-1} \left(O_p\left(\sqrt{s_1/n} \right) + \lambda \bar u + \lambda \dddot{H}_1^\top t  + \lambda \dddot{H}_2^\top \tilde t   \right),
\]
where $\bar u=\mbox{sign}(\hat \theta_{S^U_1}^o)$, $t=\mbox{sign}( \dddot{H}_1 \hat \theta_{S^U_1}^o)$, and $\tilde t=\mbox{sign}( \dddot{H}_2 \hat \theta_{S^U_1}^o)$.

Note that we have
{
\begin{eqnarray*}
 \|[\nabla Q(\hat \theta^o)]_{(S^U_1)^c}\|_{\max}  &\le&  \|[\nabla Q(\theta^o)]_{(S^U_1)^c}\|_{\max}  + \left\|\left\{\bar M(\hat\theta^o - \theta^o)\right\}_{(S^U_1)^c}\right\|_{\max} \\
 &=&  \|\bar M_{(S^U_1)^c,S^U_1} \delta_{S^U_1}\|_{\max} \\
 &=&  \left\|\bar M_{(S^U_1)^c,S^U_1} \bar M_{S^U_1,S^U_1}^{-1} \left(O_p\left(\sqrt{s_1/n} \right) + \lambda \bar u + \lambda \dddot{H}_1^\top t +  \lambda \dddot{H}_2^\top \tilde t\right)\right\|_{\max}\\
 &\lesssim&  \vertiii{\bar M_{(S^U_1)^c,S^U_1} \bar M_{S^U_1,S^U_1}^{-1}}_1 \sqrt{s_1  \log(pn)}/\sqrt{n}\\
 &\lesssim& \sqrt{s_1   \log(pn)}/\sqrt{n},
\end{eqnarray*}}
where the first inequality follows from  Assumption 4 and the second inequality follows from the order  of $\lambda$.
Thus, $\hat\theta^o - \theta^o$ satisfies \eqref{eqmaind}.
Hence, $\hat\theta^o$ is the minimizer of the original program, i.e., we have  $\hat\theta_{(S^U_1)^c}=0$ and $\dddot{H}_2 \hat\theta =0$.

Further, by the beta-min condition, 
\[
\min_{j \in S^U_1} |\hat\theta_j| \ge  \min_{j \in S^U_1} |\theta^o_j| - \|\hat\theta - \theta^o\| =  \min_{j \in S} |\theta^o_j|  - O_p\left(\sqrt{s \log(np)/n} \right) > 0.
\] 
\[
\min_j  [\dddot{H}_1 \hat\theta_{S^U_1}]_j \ge \min_j  [\dddot{H}_1(\hat\theta_{S^U_1}- \theta^o_{S^U_1})]_j - \|\dddot{H}_1 \theta^o_{S^U_1}\|
\ge   \min_j  [\dddot{H}_1(\hat\theta_{S^U_1}- \theta^o_{S^U_1})]_j - O_p\left(\sqrt{s \log(np)/n} \right) > 0.
\]
Hence, we have $\hat{S}^U_1 = S^U_1$ and $\hat S^U_2 = S^U_2$.
This completes the proof.
\ignore{
Define a matrix $M$ as:
\[
M= \int_0^1 \nabla^2 Q(\beta  + t(\hat\beta - \beta)) dt 
\] 
Then, we have
\[
M \delta =  \nabla Q(\beta+\delta)  -  \nabla Q(\beta) =  O_p\left(\sqrt{sp \log(np)/n} \right) - \lambda (s_j)_{1 \le j \le np} -   \nabla Q(\beta).
\]
Thus, we obtain
\begin{eqnarray*}
\|\delta\|_{\max} &\le&  \left\|M^{-1} \left(O_p\left(\sqrt{sp \log(np)/n} \right) - \lambda (s_j)_{1 \le j \le np} -   \nabla Q(\beta)\right)\right\|_{\max} \\
&\le& \|M^{-1}\|_{op} O_p\left(\sqrt{sp \log(np)/n} \right)  + \lambda  |\| M^{-1} \||_1 + \|\nabla Q(\beta)\|_{\max}  |\| M^{-1} \||_1 \\
&\le& O_p\left(\sqrt{sp \log(np)/n} \right).
\end{eqnarray*}}
\end{proof}


\bibliography{ref}
\end{document}